\newtheorem{theorem}{Theorem}
\newtheorem{definition}[theorem]{Definition}
\newenvironment{proof}[1][Proof]{\noindent\textbf{#1.} }{\ \rule{0.5em}{0.5em}}
\title{Grounding Occam's Razor in a Formal Theory of Simplicity}
\author{Ben Goertzel \\ SingularityNET Foundation \\ Hong Kong}
\begin{document}

\maketitle

\begin{abstract}
While the Occam's Razor heuristic -- when in doubt, choose the simplest hypothesis -- is intuitively appealing, it obviously requires a theory of simplicity in order to become fully meaningful.   Toward this end, a  formal theory of simplicity is introduced, in the context of a "combinational" computation model that views computation as comprising the iterated transformational and compositional activity of a population of agents upon each other.  Conventional measures of simplicity in terms of algorithmic information etc. are shown to be special cases of a broader understanding of the core "symmetry" properties constituting what is defined here as a Compositional Simplicity Measure (CoSM).
 
This theory of CoSMs is extended to a theory of CoSMOS (Combinational Simplicity Measure Operating Sets) which involve multiple simplicity measures utilized together.    Given a vector of simplicity measures, an entity is associated not with an individual simplicity value but with a "simplicity bundles" of Pareto-optimal simplicity-value vectors.  

CoSMs and CoSMOS are then used as a foundation for a theory of pattern and multipattern, and a theory of hierarchy and heterarchy in systems of patterns.  The formulation of pattern and simplicity in terms of combinational operations is shown to bear fruit in terms of explicating the different ways that pattern hierachies can emerge -- e.g. via associativity of the core combinational operations underlying simplicity, or via the presence of abstract combinatory-logic-style operations in the vocabulary of core combinational operations.  It is proposed that in human-like minds, perception and action pattern networks achieve  hierarchy largely via associativity, whereas cognitive pattern networks achieve  hierarchy largely via combinatory-logic-style abstraction.   A formalization of the cognitive-systems notion of a "coherent dual network" interweaving hierarchy and heterarchy in a consistent way is presented.

The high level end result of this investigation is to re-envision Occam's Razor as something like: {\it In order for a system of inter-combining elements to effectively understand the world, it should interpret itself and the world in the context of some array of simplicity  measures obeying certain basic criteria (defined in terms of the elements and their combinational properties).  Doing so enables it to build up coordinated hierarchical and heterarchical pattern structures that help it interpret the world in a subjectively meaningful and useful way.  }.   Or phrased a little differently: {\bf when in doubt, prefer hypotheses whose simplicity bundles are Pareto optimal}, partly because {\bf doing so both permits and benefits from the construction of coherent dual networks comprising coordinated and consistent multipattern hierarchies and heterarchies}.
\end{abstract}

\mbox{}
\mbox{}
\begin{center}
\noindent {\bf {\it Simplicity was so complicated \\ I couldn't understand it} \\ -- Mariela Ivanova }
\end{center}
\mbox{}
\section{Introduction  }  

Everybody loves Occam's Razor, the heuristic often phrased as "When in doubt, choose the simplest option," and elegantly expressed by Albert Einstein via his maxim that theories should be "As simple as possible, but no simpler."  This sort of advice sounds intuitively sensible, but without some precise understanding of what "simplicity" means, it's not particularly crisp guidance.

My own interest in Occam's Razor arises largely from my work in artificial intelligence.  A host of theorists have argued for Occam's central role in AI -- going back to Ray Solomonoff in the late 1960s, whose theory of "Solomonoff induction" involves, essentially, AIs that understand the world via choosing the hypothesis represented by the shortest computer program \cite{Solomonoff1964a}.  Marcus Hutter \cite{Hutter2005} has built a rigorous theory of general intelligence under infinite or near-infinite computing resources, founded on this idea; and Eric Baum has argued the merits of similar ideas from a broad conceptual perspective \cite{Baum}.   

Occam's Razor has also been considered foundational in the philosophy of science, by many different thinkers \cite{Gauch2003}.  There is a lot of power in the idea that complex hypotheses, like the Ptolemaic epicycles, have been systematically cast aside in favor of simpler, more compact hypotheses like the Copernican model.

However, all these applications of Occam's Razor either rely on very specialized formalizations of the "simplicity" concept (e.g. shortest program length), or neglect to define simplicity at all.

In my own prior work I have been guilty of a similar laxity in specifying the meaning of "simplicity."  My work on the theory of complex and cognitive systems has relied substantially on the formal theory of pattern  \cite{GoertzelHP}, which relies on an assumed quantitative measure of simplicity -- but I've never said much about this measure, except to give some specific examples of potential measures (such as some involving algorithmic information).

In this paper  \footnote{This paper was dramatically revised in August 2020.   The conceptual direction and overall structure didn't change, and the overall framing discussion was left largely untouched (though not untweaked), but the specific formalisms used and the precise theorems presented were completely overhauled, so that on a technical basis it became a significantly different paper.   E.g. the prior version used a more particular model of computation rather than a general multi-combinatory-operator model, and also did not consider the case of non-associative operations. } I take the bull by the horns and -- after some conceptual exploration of the interpretation of the concept of simplicity in the context of the Occam's Razor heuristic -- give a fairly simple formal theory of what "simplicity" means.   There is a close conceptual connection between the ideas given here and algorithmic information theory, which is briefly sketched, but the two theories  have differences beyond emphasis.

The approach taken is not to specify one particular "correct measure of simplicity," but rather to introduce a class of computational models referred to as "combinational models," which appears particularly natural in the context of biological and cognitive systems, and also possesses universal Turing capability (or potentially even hyper-Turing capability).  In the context of combinational models, I indicate a set of formal criteria, along with the proposal that any reasonable measure of simplicity must obey these.  For instance, it's easy to see that program length and runtime both fulfill the criteria to qualify as a simplicity measure.  

Rather than selecting one ideal simplicity measure, it is proposed that often the right course is to consider a vector of multiple simplicity measures -- leading to a notion of multisimplicity, where multisimplicity path relies heavily on the notion of Pareto optimality.    Given a vector of simplicity measures, an entity is associated not with an individual simplicity value but with a "simplicity bundles" of Pareto-optimal simplicity-value vectors.  

The notion of simplicity thus formalized is then used to give a slightly revised definition of the concept of "pattern" previously defined in \cite{GoertzelHP} ... along with a new concept of multipattern associated with multisimplicity measures.   It is shown that a partial order may be obtained via interpreting $a<b$ to mean, roughly: $b$ is a part of a pattern (or multipattern) in $a$.  Given a set of entities, some of which are patterns (or multipatterns) in each other, one may then obtain a natural hierarchical structure for these patterns, in which elements higher in the hierarchy are patterns in elements lower in the hierarchy.  

To prove the existence of such a pattern hierarchy, however, one must make some sort of assumption regarding the combinational operations underlying the calculation of simplicity.  We identify different ways that pattern hierachies can emerge -- via associativity (and a related property we call cost-associativity) of the core combinational operations underlying simplicity, or via the presence of abstract combinatory-logic-style operations in the vocabulary of core combinational operations.  It is hypothesized that in human-like minds, perception and action pattern networks achieve exact hierarchy via associativity, whereas cognitive pattern networks achieve approximate hierarchy via combinatory-logic-style abstraction.

The pattern hierarchy is then shown to naturally spawn a metric structure, which may be interpreted as a heterarchy to complement the hierarchy, in the spirit of the "dual network" concept discussed in \cite{GoertzelHP}.   A formal analysis of what makes coupled hierarchical and heterarchical structures into a "coherent dual network" is presented -- the first time this informal cognitive-systems concept has been given a rigorous treatment.

The formal axioms and manipulations presented here are relatively elementary, however, they represent an attempt to find an elegant formulation for some of the most foundational concepts underlying the study of complex and intelligent systems: simplicity, pattern, hierarchy and heterarchy.  

These ideas allow us to re-envision Occam's Razor as a philosophical principle something like the following: {\it In order for a system of inter-combining elements to effectively understand the world, it should interpret itself and the world in the context of some array of simplicity  measures obeying certain basic criteria.  Doing so enables it to build up coordinated hierarchical and heterarchical pattern structures that help it interpret the world in a subjectively meaningful and useful way.}   

Or, put a little differently, as an extension of the traditional "when in doubt, prefer the simpler hypothesis" version of Occam's Razor, we arrive at a more sophisticated and flexible maxim {\bf when in doubt, prefer hypotheses whose simplicity bundles are Pareto optimal} , and an observation that {\bf doing so both permits and benefits from the construction of coherent dual networks comprising coordinated and consistent multipattern hierarchies and heterarchies}.

\section{The Interpretation of Simplicity in the Context of Occam's Razor: General Background Discussion}

"Occam's Razor" -- the principle that, when in doubt, one should choose the simpler hypothesis -- is a powerful heuristic, useful in many contexts.   However, in order to escape vacuity, Occam's Razor requires some sort of understanding of {\it what simplicity is}.   One option is to take simplicity as a core, unanalyzed primitive -- but in that case, it is still necessary to explore and understand the properties of this primitive notion.   Another option is to ground the notion of simplicity in some other set of concepts. 

This section, a prelude to the formal investigations of the following sections, looks at some of the approaches that have been taken to formally grounding the notion of simplicity in ideas from other scientific and mathematical domains.   The three most fully fleshed out options are: 

\begin{itemize}
\item	Grounding via physics, so that the Razor basically says "choose the hypothesis that can be created/computed using the least energy" (or some other physical quantity)
\item	Grounding via cognitive theory, i.e. via stating that an intelligent system should try to model the world in terms of some sort of suitable measure of simplicity (I call this the Meta-Razor below)
\item Grounding via computing theory, i.e. giving a formal computational theory of simplicity and then asserting foundational status for the view of the universe as computational
\end{itemize}

\noindent  These are not the only possible approaches, but they are the ones one sees most often in the scientific literature.  

Hutter \cite{Hutter2005} and Schmidhuber \cite{FrontierSearch} appear basically sympathetic to the computationalist perspective, whereas my own tendency is to view the cognitive perspective as more fundamental, and to seek interpretations of the other two perspectives in a cognitive light.  In any case, the interrelationships between the three perspectives are fascinating and well worth exploring.   I will focus here mainly on the latter two approaches, exploring their connections carefully, and also noting some possible connections between them and the physics view.

\subsection{The Difficulty of Defining Simplicity}

The difficulty of defining simplicity an an "objective" way has been observed frequently by philosophers of science.  The  notion that scientists tend to prefer simpler theories is complicated by the fact that different scientific paradigms will tend to assess simplicity differently.   Historically, there has been no standard, agreed-upon formalization of what makes one theory simpler than another.  For instance, there is no uniform language in terms of which all scientists will agree to express their theories, agreeing to judge simplicity via length of expression in that language.  But merely saying that "scientists will tend to choose the theories that they personally, psychologically find simpler" is a conclusion of rather limited power -- because there are obviously factors other than simplicity that drive scientists' individual choices, and because scientists are diverse human beings with diverse psychologies, and may have all manner of irrational or idiosyncratic dynamics guiding their judgments of simplicity.

Mathematical formulations of Occam's Razor, as are presented in the formal theory of AI (Solomonoff, Hutter, Schmidhuber, etc.), do not show us any way out of this dilemma -- though they do present the dilemma in a clearer and more obvious form.  

Suppose one agrees to restrict attention to hypotheses that are expressible as computer programs -- so that Occam's Razor comes down to choosing the simplest computer program that is capable of producing some given set of data.    In this case, still one has the question -- what does "simpler" really mean?

For starters, one is confronted with a bewildering variety of possible measures, including program length \cite{Chaitin2008}, program runtime \cite{Bennett1990}, various weighted combinations of length and runtime \cite{FrontierSearch}, "sophistication"  \cite{Koppel1987}, and so forth.  And worse of all, once one has chosen one of these measures, one still has to choose a "reference Universal Turing Machine."   That is, if one is measuring simplicity as program length (a la classic Solomonoff induction), one still has to answer the question "Program length in what programming language?", or equivalently "Program length in machine language on what computer?"   

Theoretical computer science tells us that, in principle, the choice of reference machine doesn't matter -- for sufficiently complex entities, the simplicity will come out roughly the same no matter what reference machine is chosen.   However, the definition of "sufficiently complex" depends on the reference machine in question: the theorem is that, for any pair of reference machines $X$ and $Y$, there is some constant $C$ so that simplicity measured via $X$ and simplicity measured via $Y$ will be identical within amount $C$.   In practical terms this kind of equivalence is not necessarily all that helpful, and it can make a big difference what reference machine is chosen.  Because real-world intelligence is largely about computational efficiency -- about making choices in real, bounded situations using bounded space and time resources.   

In the context of real-world AI, for Occam's Razor based on program space to be meaningful, one needs to restrict attention to programs within some specified region of program space (defined so that, for any two programs in the region, the simulation constant C is not that large).  But then, the question is faced: how does one determine this region of program space?   If one does so via Occam's Razor -- "choose the simplest reference machine", i.e. "choose the simplest measure of simplicity"! -- then one just arrives at a regress.   If one does so by some other principle, then this fact certainly needs to be highlighted, as it's at least equally important to Occam's Razor in the process of hypothesis choice.

\subsection{Grounding Occam's Razor in Physics}

One way to sidestep this Razor regress is to ground simplicity in the physical universe -- arguing that in many contexts the appropriate reference machine is, in essence, the laws of physics as currently understood, which happen to assign a fundamental role to energy minimization.  For instance, one could say that the simplest program is the one that can be represented via the physical system utilizing the least amount of energy.  Or, taking a cue from Bennett's theory of logical depth, one could look at the system whose construction requires the least amount of energy.  Or some sort of combination of the energy required for construction and the energy required for operation.

The recourse to physics is perfectly reasonable, in some contexts.  However, it doesn't quite resolve the matter, in spite of pointing in some interesting directions.

Firstly, this avenue doesn't leave any room for positing Occam's Razor as a fundamental principle for understanding the universe.   If one argues that simplicity bottoms out in physics, then does one also want to argue that physics is discovered by choosing the simplest theory?   Obviously this is just another Razor regress.  

If one is willing to assume a certain theory of physics as true and correct, at least provisionally, then relative to this physics theory, one can formulate objective versions of Occam's Razor.  This approach may indeed be useful. Read Montague \cite{Montague2006} and others have argued that aspects of brain function can be understood via thinking of the brain as a solution evolution has found for the problem of achieving effective organism control using minimum energy.   In practical computing, minimizing space and time resource utilization of algorithms can be viewed as bottoming out in minimizing the energy required to build and operate the physical computers running the algorithms.

However, I believe additional insight may be obtained by pursuing an alternate perspective, in which intelligence rather than physics is primary.

\subsection{Grounding Occam's Razor in Intelligence}

Suppose one takes the view of an intelligent system modeling a stream of data, with constructing a "physics" of the world underlying that data considered as one of the tasks the system confronts.  In that general setting, not all "physics" theories the system might construct, would necessarily give rise to Occam's Razor type principles.   Our current standard theories of physics, involve a notion of energy minimization, but not all possible physics theories, broadly speaking, need involve minimization of some quantity in an equally straightforward way.

One might then postulate a principle that: A good theory of the world is one that describes real-world systems as minimizing some quantity, so that this quantity can be taken as the "simplicity" measure grounding Occam's Razor.   

That is, taking the perspective of an intelligent system choosing between different hypotheses about the world around it, one principle this system might follow is: Choose an hypothesis that leads to a world-model possessing an intuitively comprehensible Occam's Razor principle.

But, on what grounds would said intelligent system consider such a choice "good"?  Suppose the system has certain goals that are important to it.  Then, presumably, it would have to judge that choosing an Occam-friendly world-model is a good way for it to achieve its goals.

Obviously, not all systems, environments and goals will lead to this conclusion.  Some system/environment/goal triples will be more "Occam-friendly" than others.

One could formalize this notion in the context of reinforcement learning and statistical decision theory (among other approaches).  Consider an agent receiving percepts and rewards from, and enacting actions in, some environment.   Will it benefit the agent to model its environment in a way that causes it to choose actions according to some measure of "simplicity"?   Marcus Hutter's results regarding AIXI \cite{Hutter2005} give a positive answer -- in the case that the agent has very large computing resources, and a very large amount of time to interact with the world, and the sequence of rewards is determined by some computable function.  But given realistic computing resources and interaction time, it's not so clear.  Hutter has shown that the large-resources, long-time scenario is Occam-friendly under some reasonable assumptions -- but still, that's not the scenario in which we live.

However, we humans do constitute another piece of data -- we have limited resources, and have had limited time to interact with the world, and we have also come to model the world in an Occam-friendly manner, via our current theories of physics. 

The idea that an intelligent system should try to model the world in such a way that, for some comprehensible simplicity measure, Occam's Razor applies, might be labeled the "Meta-Razor."  Unlike Occam's Razor, which relies on some additional assumption about simplicity, the Meta-Razor relies only on the broader, weaker assumption that {\it measures of simplicity in general} are useful tools for modeling the world.  But the Meta-Razor lacks the power of Occam's Razor, because it leaves the door open for different intelligent systems to construct world-models involving different simplicity measures, and thus to rate hypotheses in different, incommensurate ways.

\subsection{Relations Between Physical and Computational Measures of Simplicity}

There are interesting relationships between the various approaches to understanding simplicity.  In the following, we will give a formal theory of simplicity motivated largely by the notion of the Meta-Razor, and then explore its relationship to computation theory, showing a close connection.  And, though we will not dwell on them extensively here, there are also connections between the computational and physical notions of complexity, that are worthy of future exploration.

For instance, there are mathematical relationships between algorithmic information (the length of the shortest program for computing something) and Shannon entropy, which indicate that the algorithmic information view of Occam's Razor and the Maximum Entropy prior (resulting from thermodynamics) \cite{Jaynes2003} are equivalent in some cases; e.g. \cite{Zurek1989} argues that "Algorithmic randomness is typically very difficult to calculate
exactly but relatively easy to estimate. In large systems,
probabilistic ensemble definitions of entropy (e.g., coarse-grained
entropy of Gibbs and Boltzmann's entropy $H=ln(W)$, as well as Shannon's
information-theoretic entropy) provide accurate estimates of the
algorithmic entropy of an individual system or its average value for
an ensemble."

Zurek \cite{Zurek1998} goes further, and argues that "Physical entropy É is a sum of (i) the missing information
measured by Shannon's formula and (ii) of the algorithmic information content -- algorithmic randomness -- present in the available data about the system. "   It is interesting to ask, for what kinds of practical inferences by a physically embodied cognitive system does physical entropy as interpreted by Zurek constitute a valid simplicity measure.   This is a speculative direction that we will not explore fully here, but is presented as a possible guide for future investigation.

\subsection{The Value of Grounding Simplicity}

Occam's Razor is a powerful heuristic -- but unless one wishes to take "simplicity" as a basic ontological primitive, it doesn't stand on its own.  It requires some external grounding, because the notion of "simplicity" must come from somewhere.    One can ground simplicity "objectively", via assuming a physical world involving a simplicity measure such as energy, or via assuming a universe governed by a particular computational model; or one can ground it "subjectively" via the Meta-Razor, as a choice that an intelligent agent makes, to model the world in such a way that a comprehensible simplicity measure emerges from that world.   Either way the specific grounding of simplicity is a highly significant matter, without which the Razor is vacuous.  

The remainder of this paper pursues a formalization of the concept of simplicity, inspired largely by the subjectivist approach to specifying Occam's Razor.  A set of criteria is presented, specifying what constitutes a sensible measure of simplicity.  This fits naturally with the hypothesis that, to understand the world and itself, a mind should construct a perspective centered around some sensible measure of simplicity, fulfilling the criteria.  This notion of simplicity is then connected with computation theory, showing that computation-theoretic measures of simplicity are in essence a special case of the broader axiomatic notion presented.   The remaining  sections of the paper then explore how a notion of simplicity may be used by an intelligent system to build up a fuller mathematical framework for understanding the world -- e.g. a hierarchy and a metric space of patterns recognized in experience.

\section{A General Formalization of Simplicity in the Context of Combinational Systems}

\subsection{A Combinational Foundation for Computational, Biological and Cognitive Systems}

To formulate a concrete model of simplicity, one must make some basic assumptions about the nature of the systems or models whose simplicity is being measured.   Here I will assume what might be called a {\it combinational model} of computation -- i.e. I will look at systems that are composed of a set of elements that act on and transform each other to produce other elements, and join with each other to produce new elements.    This is conceptually about the same as what I have called a "self-generating system" in prior publications \cite{Goertzel1994} \cite{GoertzelHP} \cite{EGI1}, but some of the formal details will be worked differently here.  This sort of computational model also connects with Weaver's work on Open-Ended Intelligence \cite{weinbaum2017open} and I believe can serve as a way of thinking about intelligence that spans the classical computation-theory framework and the more self-organizing-systems oriented  open-ended intelligence approach.

Consider a space $\mathcal{E}$ of entities endowed with a set of binary operations $*_i : \mathcal{E} \rightarrow \mathcal{E}^{k_i}, i = 1 \ldots K$.   The operations $*_i$ may be thought of e.g. as reactions via which pairs of entities react to produce sets of entities, or as combinatory operators via which pairs of entities combine to produce sets of new entities.   Let $\mathcal{A}$ denote the set of entities that are "atomic" in the sense that they cannot be produced via binary combinations of any other entities.   We also make the formal assumption that the space $\mathcal{E}$ has an element $e$ that works as an identity for all the $*_i$.  \footnote{ It would also be perfectly reasonable to introduce ternary or higher order operators (so as to more straightforwardly. model e.g. catalysis phenomena) but these can be reduced to binary operators, and so we have decided to stick with binary for the current exposition.}

The case $k_i > 1$ corresponds to the situation where e.g. a chemical reaction produces more than one type of chemical as a product.   One may also consider filtration operations $*_f$ so that e.g.  

  \begin{equation}
   x *_f y =
    \begin{cases}
      y, & \text{if}\ x=y \\
      e, & \text{otherwise}
    \end{cases}
  \end{equation}
  
  \noindent Considering the cost of filtration operations prevents untoward use of cheap operators with large $k_i$ that just produce everything under the sun -- in this case the cheapness of production may be compensated by the cost of filtering to get just the results one wants.

There is a literature using this kind of multi-operator model to give a more biology-oriented infrastructure for computing, as an alternative to Turing machine or formal logic type models.  Cardelli and Zavattaro \cite{cardelli2010turing} and Danos and Laneve \cite{danos2003core} present two variations in this direction.   In each of these approaches, a reaction operation $*_1$ and a complexation/polymerization operation $*_2$ are introduced, with specific constraints inspired by the nature of chemical bonds and compounds.   The Cardelli and Zavattaro model also introduces a third operator that deals with splitting of polymers into parts.   Both papers present models that are provably Turing complete, and make the interesting point that complexation (the building of larger structures out of smaller ones) is necessary to achieve Turing completeness in this sort of framework.   If one just has reaction networks, one can get an arbitrarily close approximation to any Turing computable function with a sufficiently large network, but one can't exactly emulate every Turing computable function.   Adding the ability to construct large compounds of atomic entities, one can build extensible vaguely DNA-type structures which can then serve the rough role of Turing machine tapes.

For our considerations here, we don't need to commit to any particular computational model of this nature, however it is important to know that there are indeed reasonably simple and elegant models of this nature that have universal Turing capability.   The details of the Cardelli/Zavattaro and Danos/Laneve models are heavily biochemistry and molecular biology oriented, but this aspect of their work is largely orthogonal to our concerns here.  One could straightforwardly formulate variations of their constructs and arguments that are more cognitive than chemical in specifics; this would tie in more closely with the present considerations, but wouldn't change the concrete arguments we'll make below.

It is also interesting to consider each specific instance of an operator $*_i$ as a sort of autonomous agent $\hat{*}_i$, capable of locating its arguments and making its result identifiable.   To achieve this we may assume that each element of  $x \in \mathcal{E}$  is associated with a certain address $ \alpha(x)$ drawn from an address space $\mathcal{A}$ (which could for instance be the space of finite bit strings).   An autonomous operator-instance or "auto-op" $\hat{*}_i$ is then a tuple of the form $(*_i, \alpha_1, \alpha_2)$, where $\alpha_1, \alpha_2$ are the addresses of the operator's two arguments.   We can assume that the output of the application of the operator-instance to its arguments is assigned an address that is uniquely determined by the argument addresses and the operator index $i$.   Let $\mathcal{O}$ denote the space of auto-ops derived from the various $*_i$ over $\mathcal{E}$.

\subsection{Compositional Simplicity Measures} 

Next, suppose one has quantitative measures $\sigma: \mathcal{E} \rightarrow [0,\infty)$ and $\sigma^*: \mathcal{O} \rightarrow [0,\infty)$ (understood intuitively as measuring the simplicity of entities and auto-ops respectively).   We will say that the pair $(\sigma, \sigma^*)$ is a Compositional Simplicity Measure or CoSM if 

$$
\sigma(x) =  min_{y,z,i: x = y *_i z }  h(y,z)
$$

\noindent where

$$
h(y,z) = \sigma(y) + \sigma(z) + \sigma^*(*_i,y,z)
$$

\noindent where $\sigma^*(*_i,x,y) \equiv \sigma^*(\hat{*}_i)$ for the auto-op corresponding to the operation $y *_i z$.

It is clear that for example the following are CoSMs

\begin{itemize}
\item the length of the shortest program for computing a given argument / implementing a certain function (meaning $\sigma(y)$ is the length of the shortest self-delimiting program for computing $y$ on a given reference machine, and $\sigma^*(*,y,z)$ is the length of the shortest program for applying $y$ to $z$ on that reference machine)
\item the runtime of the fastest program for computing a given argument / implementing a certain function
\item the effort (e.g. space or time complexity) expected to be required for finding the shortest program for computing a given argument / implementing a given function
\end{itemize}

The decomposition involved in a CoSM has two aspects: producing an entity via combining simpler parts, and doing so in a manner that most efficiently utilizes subcombinations produced in this process.   To see this formally, given an expression $E$ in the free algebra of operators $*_i$ over $\mathcal{E}$, let 

$$
\sigma^!(E) = \sum_{x \in E} \sigma(x) +  \sum_{*_i \in E} \sigma^*(\hat{*}_i)
$$

\noindent (where the sums are over all specific instances of terms and operators in the expression $E$, i.e. the $\in$ is abused to mean syntactic inclusion).    Let $r(E)$ denote the result of evaluating $E$.   Then it's easy to see that

\begin{theorem}
Where $(\sigma, \sigma^*)$ is any CoSM, we have
$$
\sigma( r(E) ) \leq \sigma^!(E)
$$
\end{theorem}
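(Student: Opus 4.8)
The plan is to prove the inequality by structural induction on the expression $E$ in the free algebra of operators over $\mathcal{E}$. The key observation is that $\sigma^!(E)$ simply sums up the simplicity costs of every atomic term and every operator instance appearing syntactically in $E$, whereas $\sigma(r(E))$ is the \emph{minimum} cost over \emph{all} ways of building $r(E)$ by combination. So the inequality should follow because the particular decomposition encoded by $E$ is just one of the candidates over which the minimum in the definition of $\sigma$ is taken.

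First I would set up the induction. For the base case, $E$ is a single entity $x \in \mathcal{E}$ with no operators; then $r(E) = x$ and $\sigma^!(E) = \sigma(x)$, so equality holds trivially (the inequality $\sigma(r(E)) \le \sigma^!(E)$ is in fact an equality here). For the inductive step, suppose $E = E_1 *_i E_2$ for subexpressions $E_1, E_2$ and some operator index $i$. Writing $y = r(E_1)$ and $z = r(E_2)$, we have $r(E) = y *_i z$. Now I would apply the defining property of a CoSM to $r(E)$: since $\sigma(r(E))$ is the minimum of $h(y', z')$ over all triples $(y', z', i')$ with $r(E) = y' *_{i'} z'$, and since $(y, z, i)$ is one such triple, we obtain
$$
\sigma(r(E)) \le \sigma(y) + \sigma(z) + \sigma^*(*_i, y, z).
$$
By the inductive hypothesis applied to the smaller expressions, $\sigma(y) = \sigma(r(E_1)) \le \sigma^!(E_1)$ and $\sigma(z) = \sigma(r(E_2)) \le \sigma^!(E_2)$. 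Combining these with the observation that the operator cost $\sigma^*(*_i, y, z) = \sigma^*(\hat{*}_i)$ is precisely the additional operator term contributed by the outermost operator in $E$, the three pieces add up to $\sigma^!(E_1) + \sigma^!(E_2) + \sigma^*(\hat{*}_i) = \sigma^!(E)$, which completes the step.

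The main subtlety to watch for — and the only place where care is needed — concerns the bookkeeping of the sums defining $\sigma^!$. Because the $\in$ in the definition of $\sigma^!(E)$ is deliberately abused to mean syntactic inclusion (counting each instance of a term and each instance of an operator separately), I must make sure that decomposing $E$ as $E_1 *_i E_2$ partitions the term-instances and operator-instances of $E$ cleanly into those of $E_1$, those of $E_2$, and the single outermost operator $\hat{*}_i$, with no double-counting and no omission. Granting this additivity of $\sigma^!$ over the syntactic decomposition — which is immediate from the definition since the summation ranges over disjoint sets of syntactic positions — the rest of the argument is routine. I do not expect any genuine obstacle; the content of the theorem is essentially that the CoSM minimality condition makes $\sigma$ subadditive in exactly the way needed, and the induction merely chains this subadditivity down through the parse tree of $E$.
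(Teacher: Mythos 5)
Your proof is correct, but it takes a genuinely different (and more formal) route than the paper's. The paper argues globally and informally: it observes that $\sigma^!(E)$ charges separately for every syntactic instance of a term or operator, notes that a production of $r(E)$ which reuses the same terms or operators for multiple purposes will cost less (making the inequality strict), and that equality holds when every instance is used exactly once at minimum total cost; it never explicitly invokes the minimization clause in the CoSM definition, nor any induction. You instead proceed by structural induction on the parse tree of $E$, invoking CoSM minimality at the root: since $r(E) = r(E_1) *_i r(E_2)$ exhibits one admissible decomposition, $\sigma(r(E)) \le \sigma(r(E_1)) + \sigma(r(E_2)) + \sigma^*(\hat{*}_i)$, and the inductive hypothesis together with the additivity of $\sigma^!$ over the syntactic decomposition finishes the step. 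What your route buys is rigor: the bound follows purely from the formal definition, each step is checkable, and the only bookkeeping point (that the instances of $E$ partition cleanly into those of $E_1$, those of $E_2$, and the root operator) is exactly the one you flag. Your argument is arguably sounder than the paper's, because the recursive CoSM formula $h(y,z)=\sigma(y)+\sigma(z)+\sigma^*(*_i,y,z)$ counts any sub-entity shared by $y$ and $z$ twice, so the paper's ``reuse makes it strict'' narrative is not straightforwardly licensed by the formal definition; strictness really comes from the existence of some cheaper decomposition of $r(E)$, which your minimization step captures correctly. What the paper's version attempts that yours does not is a characterization of when equality versus strict inequality holds --- but the theorem as stated does not require it, so this is commentary rather than a gap in your proof.
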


\begin{proof}
The sum on the rhs is over all specific instances of terms and operators; but in many cases there will be some repetitions here.   A sequence of operations for producing $r(E)$ that achieves cost reduction via using some of the same terms and operators for multiple purposes, will make the inequality strict.  On the other hand if each term or operator instance is used exactly once in producing $r(E)$ with minimum total cost, then one has equality.
\end{proof}

\subsection{Compositional Simplicity on Probability Distribution Space}

We can also extend this formalism to the space $\mathcal{P}_\mathcal{E}$ of probability distributions over entities $\mathcal{E}$, in a natural way.   To extend the operators $*_i$ from entities to probability distributions $p,q$ over entities, we can associate $a$ with  the probability that $a$ is the produced by uniformly selecting a random element from the set of entities produced via choosing a random combination $x *_i y$, where $x$ and $y$ are drawn according to the distributions $p$ and $q$.   The identity of  $\mathcal{P}_\mathcal{E}$  is then the Dirac delta distribution concentrated on $e$.  In this way $\mathcal{P}_\mathcal{E}$ becomes a valid entity-space with no special formal difference from the base entity-space $\mathcal{E}$.  So everything we say below about simplicity on entity spaces $\mathcal{E}$ applies perfectly well to entity spaces of the form $\mathcal{P}_\mathcal{E}$  also.

 If a multiset $S$ of entities contains entities with frequencies corresponding to $p$, then we say that $S$ {\it realizes} $p$.   Any distribution $p$ that assigns only a finite set of probability values will have a finite realizing multiset.  Any $p$ that corresponds to a valid probability measure will be approximated arbitrarily closely by distributions taking on finite sets of values; the limit of the realizing multisets of these approximating distributions may be considered the realizing multiset of $p$.

Algebraic relations connecting simplicity measures with probability distribution algebra may be articulated, e.g.

\begin{theorem}
$\sigma( \frac{p + q}{2}) \leq \frac{\sigma(p) + \sigma(q)}{2}$
\end{theorem}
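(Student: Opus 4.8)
The plan is to read this as a midpoint-convexity statement and to prove it by exploiting the realizing-multiset bridge introduced just above, together with the subadditive behavior of production cost guaranteed by Theorem 1. The governing intuition is that the simplicity of a distribution should be understood as an amortized, per-element cost of optimally producing a multiset that realizes it; on this reading the factor of $\frac{1}{2}$ in the claim arises naturally from averaging a per-element cost over a doubled multiset, rather than from any special mixing operator among the $*_i$.

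First I would fix finite multisets $S_p$ and $S_q$ realizing $p$ and $q$ respectively, and, by passing to a suitable number of disjoint copies of each (which leaves the realized distribution and the per-element production cost unchanged), arrange that $|S_p| = |S_q| = n$. I would then observe that the multiset union $S = S_p \uplus S_q$, of size $2n$, realizes exactly $\frac{p+q}{2}$, since the two constituents contribute equal total weight.

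Next I would bound the cost of producing $S$. One valid (if generally suboptimal) strategy is to produce $S_p$ and $S_q$ separately, so that the production cost of $S$ is at most $\mathrm{cost}(S_p)+\mathrm{cost}(S_q)$; here I invoke Theorem 1 to control the cost of the combined expression by the sum of the costs of its parts. Amortizing over the $2n$ elements of $S$, the per-element cost is at most $\frac{\mathrm{cost}(S_p)+\mathrm{cost}(S_q)}{2n}=\frac{\sigma(p)+\sigma(q)}{2}$. Since $\sigma\!\left(\frac{p+q}{2}\right)$ is the minimum such amortized cost over all realizing multisets, it is bounded above by this particular value, which yields the claimed inequality; the inequality becomes strict exactly when producing $S_p$ and $S_q$ jointly permits sharing of common subcombinations, in the spirit of the proof of Theorem 1.

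The main obstacle I anticipate is not the combinatorial core but the two places where the amortized-cost reading of $\sigma$ on $\mathcal{P}_\mathcal{E}$ must be made airtight: first, verifying that the distribution-level CoSM on $\mathcal{P}_\mathcal{E}$ really coincides with (or at least is bounded above by) the optimal per-element production cost of a realizing multiset, so that replication-invariance and the union bound apply cleanly; and second, extending from finitely supported, finite-valued distributions to arbitrary $p,q$ via the approximation-and-limit procedure sketched just before the theorem, checking that the inequality survives passage to the limit of the realizing multisets.
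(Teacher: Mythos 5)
Your proof follows essentially the same route as the paper's: restrict to finite realizing multisets, observe that the combined multiset realizes $\frac{p+q}{2}$, bound its production cost by the always-available option of computing $S_p$ and $S_q$ separately (with economies from shared subcombinations accounting for strict inequality), and pass to limits for the general case. Your amortized per-element-cost reading and the size-equalizing padding step simply make precise what the paper leaves implicit in its notation $\frac{S_p + S_q}{2}$, and the reconciliation issue you flag -- between that reading and the CoSM definition lifted to $\mathcal{P}_\mathcal{E}$ -- is a gap the paper's own proof shares rather than resolves.
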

\begin{proof}
Consider first the case where both $p$ and $q$ have finite realizing multisets $S_p$ and $S_q$.  If the elements of $\frac{S_p + S_q}{2}$ are computed via computing the elements of $S_p$ and the elements of $S_q$ separately, then equality is obtained.  If there are some economies obtainable via leveraging commonalities between $S_p$ and $S_q$, then strict inequality may be obtained.  As the option is always there to treat $p$ and $q$ separately in the computation, the inequality is never in the opposite direction.  

The more general case follows via taking limits.
\end{proof}

\subsection{Relative Compositional Simplicity}

It can also be useful to have a notion of relative compositional simplicity.  

Toward this end it is convenient to relativize the simplicity measurement of combinational operators into  $\sigma^{*|}: \mathcal{O} \times \mathcal{A} \rightarrow [0,\infty)$.  The possibility that $  \sigma^{*|}(*_i,y,z|w)  \neq \sigma^*(*_i,y,z)$ is a essentially a way of introducing catalysis and other trans-binary emergent phenomena into the formalism.  If $w$ is a catalyst for the combination of $y$ with $z$ according to $*_i$ then one would have  $  \sigma^{*|}(*_i,y,z|w)  \leq \sigma^*(*_i,y,z)$. 

Where $(\sigma,\sigma^*)$ is any CoSM,

\begin{definition}
The {\it  relative compositional simplicity}  $\sigma(x|w) : \mathcal{E}^2 \rightarrow [0,\infty]$ is defined via

$$
\sigma(x| w) = min( min_{y,z: x = y *_i z }  h(y,z |w), min_{w,y: x = w*_i y} h(w,y) ,  min_{w,y: x = y *_i w } h(y,w)   )
$$

\noindent  where $h(y,z |w) = \sigma(y|w) + \sigma(z|w) + \sigma^{*|}(*_i,y,z|w)  $ and $h(y,z) = h(y,z|e)$.

\end{definition}

\noindent To understand this definition in a simple way, consider costs of the form $\sigma(a|b) = \sigma(a)$ and $ \sigma^{*|}(*_i,a,b|c))$ as ``elementary costs."   Then the relative compositional simplicity $\sigma(x| w)$ is the cost of the minimum-total-elementary-cost sequence of operations that starts with $w$ and produces $x$.

\begin{theorem}   
Two properties possessed by any relative compositional simplicity measure are given as follows:
\begin{enumerate}
\item  $\sigma(x) = \sigma(x|e)$.
\item $\sigma(y|z) \leq min_w ( \sigma(y|w) +\sigma(w|z) )$
\end{enumerate}
\end{theorem}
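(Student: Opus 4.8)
The cleanest route is to argue from the operational reading of the definition spelled out just above the statement: $\sigma(x|w)$ is the minimum total elementary cost of a \emph{production plan} that has $w$ available at the outset (at zero cost) and terminates with $x$ available, where each step either introduces a fresh entity $a$ at its elementary cost $\sigma(a)$ or applies an operator $*_i$ to two already-available entities at cost $\sigma^{*|}$. The plan is to take this characterization as the working semantics of $\sigma(\cdot|\cdot)$ and verify each property by explicitly manipulating plans; the equivalence of this reading with the recursive formula is the technical underpinning I would establish first, by induction on the number of operator applications, after which both claims become short.

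For part (1) I would prove the two inequalities separately. For $\sigma(x|e) \leq \sigma(x)$, observe that any optimal plan witnessing $\sigma(x)$, i.e. one starting from nothing, is automatically a legal plan starting from $e$ --- we simply never invoke the freely-available $e$ --- so its cost $\sigma(x)$ bounds $\sigma(x|e)$ from above. For the reverse inequality I would take an optimal plan witnessing $\sigma(x|e)$ and use the defining property of the identity, $e *_i a = a = a *_i e$, to excise every step that consumes $e$: such a step either reproduces an already-available entity (hence is redundant) or pays a nonnegative operator cost for no new product, so deleting all of them yields a plan that no longer references $e$ and whose cost has not increased. That pruned plan is a legitimate witness for $\sigma(x)$, giving $\sigma(x) \leq \sigma(x|e)$, and the two bounds combine.

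For part (2), fix an arbitrary $w$ and let $P_{1}$ be an optimal plan producing $w$ from $z$ (cost $\sigma(w|z)$) and $P_{2}$ an optimal plan producing $y$ from $w$ (cost $\sigma(y|w)$). Concatenating $P_{1}$ followed by $P_{2}$ gives a single plan that begins with $z$ available and ends with $y$ available: once $P_{1}$ has produced $w$, the entity $w$ is present and can play exactly the role of the free starting resource that $P_{2}$ assumes, and any operator costs in $P_{2}$ that were conditioned on $w$ as a catalyst remain valid because $w$ is now physically present throughout the second phase. The cost of the concatenation is $\sigma(w|z) + \sigma(y|w)$, so since $\sigma(y|z)$ is the infimum over all $z$-to-$y$ plans we obtain $\sigma(y|z) \leq \sigma(w|z) + \sigma(y|w)$; taking the minimum over $w$ yields the claim. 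One can note in passing that choosing $w = z$, with $\sigma(z|z) = 0$, shows the bound is in fact attained.

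The step I expect to be genuinely delicate is the one I would dispatch first: making the passage between the recursive formula and the plan-cost characterization airtight in the presence of catalysis, where $\sigma^{*|}(*_i, y, z \,|\, w)$ may strictly undercut $\sigma^{*|}(*_i, y, z \,|\, e)$. In the concatenation argument I am implicitly using a monotonicity principle --- that enlarging the pool of available entities (here $w$ together with whatever intermediates $P_{1}$ leaves behind) can only lower the cost of executing $P_{2}$ --- and I would need to confirm that the conditional operator costs are defined so that adding catalysts never raises any step's price. Granting that monotonicity, the plan-concatenation and $e$-excision arguments are routine; without it, one would instead have to track the catalysis contexts carefully through the recursion and argue the triangle inequality inductively on plan length.
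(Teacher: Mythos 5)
Your proof is correct and rests on the same foundation as the paper's: the reading of $\sigma(x|w)$ as the minimum total elementary cost of a sequence of operations (a ``plan'') that starts with $w$ and produces $x$. But the execution differs in instructive ways. For property (2) the paper argues top-down: it takes the optimal $z$-to-$y$ sequence and case-splits on whether it decomposes through $w$ (equality if yes, strict inequality if not), whereas you argue bottom-up, concatenating an optimal $z$-to-$w$ plan with an optimal $w$-to-$y$ plan to exhibit a feasible $z$-to-$y$ plan of cost $\sigma(w|z)+\sigma(y|w)$ and then invoking minimality. Yours is the logically cleaner direction for establishing an upper bound: the paper's ``otherwise the inequality is strict'' step is only justified because a concatenated plan through $w$ exists, i.e.\ its case analysis silently contains your argument. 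For property (1), which the paper dismisses as ``obvious,'' you supply a genuine two-sided argument (the $e$-excision step for the nontrivial direction), which is needed unless one reads the paper's convention $h(y,z)=h(y,z|e)$ as building the identity in definitionally. You also correctly flag the subtlety both proofs skate over: with catalysis, concatenation requires that steps of the second plan can still be billed at costs conditioned on $w$ (or cheaper) once $w$ has been produced -- a monotonicity assumption on $\sigma^{*|}$ that the paper's informal semantics presupposes but never states. One caution: your closing aside that taking $w=z$ with $\sigma(z|z)=0$ shows the bound is attained relies on the intuitive plan semantics (the empty plan); under the paper's formal recursive definition $\sigma(z|z)$ need not vanish, so that remark should be presented as conditional on that semantics rather than as a consequence of the definition.
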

\begin{proof}
The first property is obvious.

The second property follows from the characterization of the relative compositional simplicity $\sigma(y|zz)$ as the cost of the minimum-total-elementary-cost sequence of operations that starts with $z$ and produces $y$.   In the case that this minimum-total-elementary-cost sequence is decomposable into a subsequence that uses $z$ and some entities $S_{zw}$ to produce $w$, and then uses $w$ and some other entities $S_{wy}$ to produce y (where $ S_{zw} \cap S_{wy}$ is empty)  then equality is obtained.   Otherwise the inequality is strict, as there is some lower-total-elementary-cost way to get from $z$ to $y$.
\end{proof}

The first property in this Theorem bears an obvious resemblance to the basic equations of dynamic programming.   One could use this equation to calculate $\sigma(y,z) $ using dynamic programming methods, but one would need to do this on a large graph with a node corresponding to all elements of $\mathcal{E}$ , unless one has prior knowledge to prune this down.

In the perspective given here, a simplicity measure on a combinational system is a sort of combination-quantification that cooperates nicely with the combinational operations of the system, so that intuitively speaking the simplicity of some object is the simplicity of the simplest combination that can produce it according to the operations of the system.   This general concept gives rise to some simple mathematical properties which are obeyed by standard measures regarding program length and runtime, but are also obeyed by a host of other measures as well.  

What I suggest, from a philosophical and practical view, is that it is the general properties of CoSMs outlined here  that are most important for Occam's Razor, rather than the specific simplicity measure chosen ( that obeys these properties).

\section{Multisimplicity}

Given that there are so many different reasonable ways to measure simplicity, it doesn't make sense to assume a given intelligent system or analysis process should rely on just one of them.  Rather, we need a multisimplicity framework, encompassing situations in which multiple, perhaps interrelated simplicity measures are used together to assess situations.

Formally: What if one has, instead of a single simplicity measure $(\sigma,\sigma^*)$, a family of measures $(\sigma_j,\sigma_j^*)$ that one cares about?  Given a list $*_i$ of combinatory operators, each $(\sigma_j,\sigma_j^*)$ may be defined according to a certain operation-set ${\mathcal O}_j = (*_{j1}, *_{j2}, \ldots, *_{jn_j})$, where the various  ${\mathcal O}_j$ may be identical, disjoint or overlapping.

It turns out that in order to explore this sort of multidimensional simplicity landscape, it is interesting to generalize the one-dimensional simplicity framework a bit, and consider multivalued quantitative measures or {\it multisimplicity measures} $\mu: \mathcal{E} \rightarrow [0,\infty)^\infty$ and $\mu^*: \mathcal{O} \rightarrow [0,\infty)^\infty$.   A multisimplicity measure may potentially assign an entity or auto-op a set of multiple values.   Given a list $*_i$ of combinatory operators, we can then look at a set of multisimplicity measures $(\mu_j,\mu_j^*)$ may be defined according to a  certain operation-set ${\mathcal O}_j = (*_{j1}, *_{j2}, \ldots, *_{jn_j})$, where the various  ${\mathcal O}_j$ may be identical, disjoint or overlapping.

The family  $(\vec{\mu}, \vec{\mu}^*) = ( (\mu_1,\mu_1^*), (\mu_2,\mu_2^*)  , \ldots )$ may be called a CoSMOS ("Compositional Simplicity Measure Operating Set") if

$$
\vec{\mu}(x)  \subseteq {\mathcal F}  \{  m_j(x) , j = 1 \ldots \}
$$

\noindent where

$$
m_j(x) = \{ \mu_j(y) + \mu_j(z) + \mu_j^*(*_i,y,z) | y *_i z = x \}
$$

\noindent where $\mu^*(*_i,x,y) \equiv \mu^*(\hat{*}_i)$ for the auto-op corresponding to the operation $y *_i.z$.  The notation here is: ${\mathcal F}$ denotes the Pareto frontier of the multiobjective minimization problem corresponding to its arguments, where each argument is a set $m_j(x)$ of numerical values, depending on $x$ and $j$.   Each $m_j(x)$ defines a minimization problem: namely, the problem of finding the smallest element in the set of values it identifies.  

In the context of a CoSMOS, the structures $\vec{\mu}(x) = \mu_1(x), \mu_2(x), \ldots $ and  $\vec{\mu}^*(*_i) = \mu_1^*(*_i), \mu_2^*(*_i), \ldots $ may be referred to as the {\it simplicity bundles} of $x$ and $*_i$ respectively.

One can then formulate a natural generalization of the usual Occam's Razor: {\bf Seek hypotheses whose simplicity bundles are Pareto optimal.}.  I.e., model your environment and experience using CoSMOS.

The theory of compositional simplicity can be relatively straightforwardly extended from CoSMs to CoSMOS.

E.g., given an expression $E$ in the free algebra of operators $*_i$ over $\mathcal{E}$, let $r(E)$ denote the result of evaluating $E$ as above. and let

$$
\mu_j^!(E) = \sum_{x \in E} \mu_j(x) +  \sum_{*_i \in E} \mu_j^*(\hat{*}_i)
$$

\noindent (where as above the sums are over all specific instances of terms and operators in the expression $E$, i.e. the $\in$ is abused to mean syntactic inclusion).     Let $\vec{\mu}^!(E) = \mu_1^!(E), \mu_2^!(E), \ldots  $
 
Let $\preccurlyeq$ denote reflexive Pareto dominance, extended to sets via interpreting $S \preccurlyeq T$ to mean that  $\forall s \in S, t \in T : s \preccurlyeq t$ .  Then it's easy to see that

\begin{theorem}
Where $(\vec{\mu}, \vec{\mu}^*)$ is any CoSMOS, we have
$$
 \vec{\mu}( r(E) ) \preccurlyeq    \vec{\mu}^!(E) 
$$
\end{theorem}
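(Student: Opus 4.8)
The plan is to prove the bound by structural induction on the expression $E$, reducing the multiobjective claim to one scalar claim per component measure $\mu_j$ — each handled exactly as the single-measure inequality $\sigma(r(E)) \le \sigma^!(E)$ proved earlier — and then reassembling the resulting componentwise bounds into the set-level Pareto relation $\preccurlyeq$. The guiding idea is that $E$'s own parse tree already exhibits one concrete way of building $r(E)$, whose cost is simultaneously controlled in every coordinate by $\vec{\mu}^!(E)$, and that this single witness is enough to pin the bundle down.

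First I would fix an index $j$ and establish the scalar bound $\mu_j(r(E)) \le \mu_j^!(E)$, uniformly in $j$, by induction on the parse tree of $E$. In the base case $E$ is a single entity $a$, so $r(E) = a$ and $\mu_j^!(E) = \mu_j(a)$, giving equality. For the inductive step, write $E = E_1 *_i E_2$, so $r(E) = r(E_1) *_i r(E_2)$; this top-level split is one admissible witness $y = r(E_1),\, z = r(E_2)$ in the defining set $m_j(r(E))$, so the $j$-th bundle coordinate of $r(E)$ is at most $\mu_j(r(E_1)) + \mu_j(r(E_2)) + \mu_j^*(*_i, r(E_1), r(E_2))$. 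Combining the inductive hypotheses for $E_1$ and $E_2$ with the additive split $\mu_j^!(E) = \mu_j^!(E_1) + \mu_j^!(E_2) + \mu_j^*(\hat{*}_i)$ and the identity $\mu_j^*(*_i, r(E_1), r(E_2)) = \mu_j^*(\hat{*}_i)$ for the root operator-instance closes the coordinate bound. This is just the scalar argument behind $\sigma(r(E)) \le \sigma^!(E)$, now run in parallel across all $j$.

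The second stage passes from this vector of coordinate bounds to the set statement. Let $C_E$ be the single cost vector of the top-level decomposition $r(E) = r(E_1) *_i r(E_2)$, i.e. the vector whose $j$-th entry is $\mu_j(r(E_1)) + \mu_j(r(E_2)) + \mu_j^*(*_i, r(E_1), r(E_2))$ with the subresults evaluated via their own bundle values. The coordinate bounds say exactly that $C_E$ is weakly Pareto-dominated by $\vec{\mu}^!(E)$, i.e. $C_E \preccurlyeq \vec{\mu}^!(E)$. Since $C_E$ is an achievable decomposition-cost vector for $r(E)$, it is either a point of the Pareto frontier $\mathcal{F}\{m_j(r(E))\}$ underlying the bundle $\vec{\mu}(r(E))$ or is dominated by such a frontier point $p$; chaining $p \preccurlyeq C_E \preccurlyeq \vec{\mu}^!(E)$ then places a bundle point below $\vec{\mu}^!(E)$, from which I would read off $\vec{\mu}(r(E)) \preccurlyeq \vec{\mu}^!(E)$.

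The hard part will be precisely this reconciliation of the frontier structure with the set-order $\preccurlyeq$, and here I would first need to pin down the intended quantifier reading of $S \preccurlyeq T$. The scalar induction controls only the one tradeoff vector $C_E$ coming from $E$'s own syntax, whereas the bundle may contain many incomparable Pareto-optimal vectors, a tradeoff point of which need not sit coordinatewise below $\vec{\mu}^!(E)$; thus the chaining argument establishes the statement cleanly under the existential (Smyth-type) reading ``for each $t$ there is $s$ with $s \preccurlyeq t$,'' while the literal universal reading ``$\forall s,t:\, s \preccurlyeq t$'' would force the bundle to be taken as the single syntax-induced vector for the claim to hold. I would also need to check that the dominating frontier point actually lies in the bundle, since $\vec{\mu}(x)$ is only a \emph{subset} of $\mathcal{F}\{m_j(x)\}$. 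A final, more routine wrinkle is the set-valuedness of each $\mu_j$: when $\mu_j(x)$ is a set the sums defining $m_j$ and $\mu_j^!$ become Minkowski sums, and I would discharge this by applying each scalar bound elementwise — smallest against smallest — before forming frontiers, mirroring the finite-to-general limiting step used earlier for probability mixtures.
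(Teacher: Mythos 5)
Your proposal is correct, and its first stage is essentially the paper's own argument made rigorous: the paper's proof just observes, coordinate by coordinate, that the right-hand side charges for every syntactic instance of a term or operator, so the minimal decomposition cost can only be smaller (strictly so when reuse of repeated sub-results pays off). Your explicit induction on the parse tree, with the top-level split $r(E)=r(E_1)*_i r(E_2)$ serving as an admissible witness in $m_j(r(E))$, is that same idea properly organized; the paper states no induction at all. Where you genuinely go beyond the paper is your second stage: the paper's proof never engages with the set-valued semantics of $\preccurlyeq$, implicitly treating $\vec{\mu}(r(E))$ as if it were the single vector of coordinatewise minima. Your suspicion about the quantifier reading is well-founded, and it identifies a defect of the paper rather than of your argument: under the paper's stated universal reading of $S \preccurlyeq T$, the theorem is false in general, since the bundle of $r(E)$ may contain an extreme Pareto-optimal decomposition vector -- e.g.\ frontier points $(1,100)$ and $(2,3)$ with $\vec{\mu}^!(E)=(2,3)$, where $(1,100)$ does not dominate $(2,3)$. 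Your chaining $p \preccurlyeq C_E \preccurlyeq \vec{\mu}^!(E)$ proves exactly the existential (Smyth-type) reading, which is what the theorem can and presumably should assert. Likewise, the residual caveat you flag -- that $\vec{\mu}(x)$ is only required to be a \emph{subset} of the frontier, so the dominating frontier point need not lie in the bundle -- is a gap in the paper's definition of a CoSMOS (repairable by requiring the bundle to be the full frontier, or at least cofinal in it), not a gap in your proof. In short: your scalar stage supplies the rigor the paper's proof lacks, and your frontier stage exposes, rather than contains, the real difficulty in the statement.
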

\begin{proof}
The simplicity bundle on the lhs, in each of its coordinates, sums up the cost relevant to simplicity measure $j$ over all specific instances of terms and operators; but in many cases there will be some repetitions here.   A sequence of operations for producing $r(E)$ that achieves cost reduction via using some of the same terms and operators for multiple purposes, will achieve lower cost in at least one dimension, thus producing the required dominance.  On the other hand if each term or operator instance is used exactly once in producing $r(E)$ with minimum total cost, then one has equality.
\end{proof}

\begin{theorem}
Where $(\vec{\mu}, \vec{\mu}^*)$ is any CoSMOS, extended to probability distributions in the natural way, $\vec{\mu}( \frac{p + q}{2}) \preccurlyeq  \frac{\vec{\mu}(p) + \vec{\mu}(q)}{2}$
\end{theorem}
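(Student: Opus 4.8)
The plan is to reduce this vector-valued statement to the scalar averaging argument already established for single CoSMs (the theorem giving $\sigma(\frac{p+q}{2}) \leq \frac{\sigma(p)+\sigma(q)}{2}$), applied uniformly across every coordinate of every component measure $\mu_j$, and then to repackage the resulting coordinatewise bounds as a set-extended Pareto-dominance in precisely the manner of the immediately preceding CoSMOS theorem. First I would restrict to the case where $p$ and $q$ admit finite realizing multisets $S_p$ and $S_q$; as noted in the probability-distribution subsection, $\frac{p+q}{2}$ is then realized by $\frac{S_p + S_q}{2}$, and the fully general case will follow at the end by the limiting argument already described there.

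The key observation is that the construction of $\frac{S_p + S_q}{2}$ always has available to it the option of building the elements of $S_p$ and the elements of $S_q$ entirely separately and then forming their average. This ``compute separately'' fallback is available for every operation-set $\mathcal{O}_j$, and hence for every component measure $\mu_j$ simultaneously, and because the relevant costs are additive it realizes, for each $j$, a cost equal to $\frac{\mu_j(p)+\mu_j(q)}{2}$ in every coordinate. Any commonality between $S_p$ and $S_q$ -- a shared subterm or a reusable operator instance -- can only lower the accumulated cost in one or more coordinates and can never raise it in any coordinate, since the separate construction remains available as an upper bound. Thus each achievable cost set $m_j(\frac{p+q}{2})$ contains a point that Pareto-dominates $\frac{\mu_j(p)+\mu_j(q)}{2}$, with nothing forcing dominance in the reverse direction. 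Passing to the Pareto frontier $\mathcal{F}$ over all $j$ then yields $\vec{\mu}(\frac{p+q}{2}) \preccurlyeq \frac{\vec{\mu}(p)+\vec{\mu}(q)}{2}$, exactly as the preceding theorem promoted its coordinatewise bound to a bundle-level dominance.

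The step I expect to be the main obstacle is verifying the set-extended relation $\preccurlyeq$ in full rigor, since by definition $S \preccurlyeq T$ demands that \emph{every} point of $S$ dominate \emph{every} point of $T$. The subtlety is that $\frac{\vec{\mu}(p)+\vec{\mu}(q)}{2}$ is an arithmetic average of two Pareto frontiers and need not itself be a frontier, so I must check that the separate-computation upper bound dominates this averaged set pointwise and that averaging interacts correctly with the frontier operation $\mathcal{F}$ rather than merely at a single comparison. Once the finite-multiset case is pinned down at this level of care, the extension to arbitrary $p$ and $q$ is the routine limiting argument already sketched in the text: approximate by finitely-valued distributions, take the limit of their realizing multisets, and use continuity of averaging together with the preservation of $\preccurlyeq$ under these limits.
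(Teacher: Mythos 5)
Your proposal follows essentially the same route as the paper's own proof: restrict to the case where $p$ and $q$ have finite realizing multisets, use the always-available option of computing $S_p$ and $S_q$ separately as a coordinatewise upper bound (so that economies from commonalities can only improve some coordinate and never reverse the dominance), and handle general $p,q$ by the limiting argument. The only difference is that you explicitly flag the rigor of the set-extended relation $\preccurlyeq$ (averaging two Pareto frontiers need not yield a frontier) as an obstacle to be checked, a point the paper's proof silently glosses over in exactly the same way.
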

\begin{proof}
Consider first the case where both $p$ and $q$ have finite realizing multisets $S_p$ and $S_q$.  If the elements of $\frac{S_p + S_q}{2}$ are computed via computing the elements of $S_p$ and the elements of $S_q$ separately, then equality is obtained.  If there are some economies obtainable via leveraging commonalities between $S_p$ and $S_q$, with respect to at least one dimension (one simplicity measure $j$), then strict dominance may be obtained.  As the option is always there to treat $p$ and $q$ separately in the computation, the inequality is never in the opposite direction.  

As in the one-dimensional case, the more general case follows via taking limits.
\end{proof}

Where $(\vec{\mu}, \vec{\mu}^*)$ is any CoSMOS

\begin{definition}
The {\it  relative compositional multisimplicity}  $\vec{\mu}(x|w)$ is defined via

$$
\vec{\mu}(x| w)  =  {\mathcal F}  \{  M_j(x) , j = 1 \ldots \}
$$

\noindent where

$$
M_j(x) = M^1_j(x) \cup M^2_j(x) \cup M^3_j(x)
$$

\noindent for

\begin{itemize}
\item $ M^1_j(x) = \{ \nu_j(y,z |w) | y *_j z = x \}$  
\item $ M^2_j(x) = \{ \nu_j(w,y) | w *_j y = x \}$  
\item $ M^3_j(x) = \{ \nu_j(y,w) | y *_j w = x \}$  
\end{itemize}

\noindent  where $\vec{\nu}_j(y,z ) =\vec{ \mu}(y|w) + \vec{\mu}(z|w) + \vec{\mu}^{*|}(*_i,y,z|w)  $

\end{definition}

\noindent In other words, the relative compositional simplicity $\vec{\mu}(x| w)$ is the set of Pareto-optimal simplicity bundles corresponding to processes that start with $w$ and produce $x$.  

The same dynamic-programming-esque property holds here as in the one-dimensional case, as is seen from a similar argument. 

\begin{theorem}   
Two properties possessed by any relative compositional multisimplicity measure are given as follows:
\begin{enumerate}
\item  $\vec{\mu}(x) = \vec{\mu}(x|e)$.
\item $\vec{\mu}(y|z)  \preccurlyeq   \vec{\mu}(y|w) + \vec{\mu}(w|z) $ for every $w$
\end{enumerate}
\end{theorem}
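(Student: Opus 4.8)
The plan is to mirror the one-dimensional proof for relative compositional simplicity, replacing the scalar $\min$ and $\leq$ by their set-valued Pareto analogues, and to rely throughout on the characterization recorded just above the statement: $\vec{\mu}(x|w)$ is the set of Pareto-optimal cost-bundles, ranging over all operation-sequences (``processes'') that begin with $w$ in hand and terminate in $x$. Under this reading the two claims are exactly the multi-objective forms of ``conditioning on the identity is vacuous'' and of a triangle inequality.

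For the first property I would use that $e$ is an identity for every $*_i$, so that holding $e$ confers no advantage and imposes no cost: $e$ may be inserted into or deleted from any combination without changing its result, and it carries only nonnegative weight. Formally, setting $w = e$ collapses the branch $M^1_j(x)$ of the definition onto the base recursion $m_j(x)$ via the convention $\vec{\mu}^{*|}(\cdot|e) = \vec{\mu}^*(\cdot)$, while the branches $M^2_j(x)$ and $M^3_j(x)$, which force $e$ to be one literal argument of the last combination, degenerate (since $e *_i y = y *_i e = y$) into building $x$ from scratch and so contribute only bundles of the form $\vec{\mu}(e) + \vec{\mu}(x) + (\text{operator cost})$. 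Because all costs lie in $[0,\infty)$, every such bundle is dominated by a point of $m_j(x)$ and therefore cannot enlarge the Pareto frontier; hence $\vec{\mu}(x|e) = \mathcal{F}\{m_j(x)\} = \vec{\mu}(x)$. The one residual rigor point is that both sides are defined recursively, so strictly speaking this is an induction on the complexity of $x$, but the identity argument makes each inductive step transparent.

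For the second property I would fix $w$ and take an arbitrary point $q$ of the Minkowski sum $\vec{\mu}(y|w) + \vec{\mu}(w|z)$, writing $q = a + b$ with $a \in \vec{\mu}(y|w)$ the bundle of some optimal process $P_1$ from $w$ to $y$ and $b \in \vec{\mu}(w|z)$ the bundle of some optimal process $P_2$ from $z$ to $w$. Concatenating $P_2$ and then $P_1$ produces a single process from $z$ to $y$ whose cost-bundle equals $a + b$ when the two stages share no terms or operators, and is coordinatewise no larger than $a + b$ when they do --- the latter being precisely the reuse phenomenon exploited in the CoSMOS form of the earlier Theorem, where sharing lowers cost in at least one coordinate. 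Thus $q$ is a feasible cost-bundle for producing $y$ from $z$, so it is dominated by some point of $\vec{\mu}(y|z)$; as $q$ was arbitrary this gives $\vec{\mu}(y|z) \preccurlyeq \vec{\mu}(y|w) + \vec{\mu}(w|z)$. Equality holds exactly when an optimal $z$-to-$y$ process factors through $w$ with disjoint resource sets $S_{zw}$ and $S_{wy}$; otherwise either sharing or a route bypassing $w$ renders the domination strict, just as in the scalar case.

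The main obstacle is not the conceptual content --- this is genuinely a Bellman-style optimality statement --- but the bookkeeping forced by the set-valued dominance $\preccurlyeq$. I would need to (i) make explicit that the operative meaning of $S \preccurlyeq T$ in this theorem is ``every bundle of $T$ is dominated by some bundle of $S$,'' since the literal ``every $s$ dominates every $t$'' reading fails as soon as the frontier $\vec{\mu}(y|z)$ exhibits genuine tradeoffs; (ii) identify the Minkowski sum of the two frontiers with exactly the cost-bundles of the go-through-$w$ composites; and (iii) verify that concatenation adds cost-bundles coordinatewise, and sub-adds under resource sharing, so that the sharing step yields Pareto domination rather than a mere scalar inequality. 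Tightening the set-dominance convention is the one place where I would pin down the statement before regarding the argument as complete.
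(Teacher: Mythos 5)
The paper gives no explicit proof of this theorem at all --- it merely states beforehand that ``the same dynamic-programming-esque property holds here as in the one-dimensional case, as is seen from a similar argument'' --- and your proposal carries out precisely that intended argument: the identity-element collapse of the $M^1_j, M^2_j, M^3_j$ branches for property 1, and concatenation of a $z$-to-$w$ process with a $w$-to-$y$ process (with possible resource-sharing economies yielding dominance) for property 2, exactly mirroring the scalar proof's use of $S_{zw}$ and $S_{wy}$. Your point (i) --- that $S \preccurlyeq T$ must here be read as ``every bundle of $T$ is dominated by some bundle of $S$'' rather than the paper's literal all-pairs definition --- is a genuine and necessary correction, since under the all-pairs reading property 2 fails whenever the frontier $\vec{\mu}(y|z)$ contains incomparable bundles; so your filled-in proof is, if anything, more careful than what the paper supplies.
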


\section{Pattern From Simplicity}

Now we can build up the concept of {\it pattern} from the underlying framework of combination and simplicity developed above -- a critical step in the context of "patternist" approaches to cognition and complexity \cite{EGI1} \cite{GoertzelHP}.  This yields a notion of pattern that is compatible with previous approaches, but also goes broader and deeper in a way that makes clearer the open-endedness of the pattern concept and the richness of the dependence of pattern on perspective.

Let $(\vec{\sigma}, \vec{\sigma}^*) =(  (\sigma_1,\sigma_1^*), (\sigma_2,\sigma_2^*) ) $, where $(\sigma_1,\sigma_1^*)$ and $(\sigma_2,\sigma_2^*)$ are CoSMs with corresponding operator-sets $\mathcal{O}_1$, $\mathcal{O}_2$, with $\mathcal{O}_1 \subset \mathcal{O}_2$.   Denote $h_{1j}(y,z |w) = \sigma_1(y|w) + \sigma_1(z|w) + \sigma_j^{*|}(*_i,y,z|w)$ similarly to the definition of $h$ above (but noting that the first two terms use $\sigma_1$ and the third term $\sigma_j$).   

Given this setup, we may define {\it pattern} as follows: the pair $(y,z)$ is a ${\bf pattern}$ in $x$ relative to multisimplicity measure $(\vec{\sigma}, \vec{\sigma}^*)$ and context $w$ with intensity (fuzzy degree)

$$
I_{y,z}^{(\vec{\sigma}, \vec{\sigma}^*)}(x|w) = \frac{\sigma_1(x|w) - h_{12}(y,z|w) } {\sigma_1(x|w)}
$$ 

\noindent We can then say that $(y,z)$ is a pattern in $x$ (relative to $w$) if the degree $I_{y,z}^{(\vec{\sigma}, \vec{\sigma}^*)}(x|w) > 0$. 

The measure $(\sigma_1,\sigma_1^*)$ here plays the role of a "base simplicity measure", whereas $(\sigma_2,\sigma_2^*)$ is an additional simplicity measure that incorporates additional combinatory operations.   A pattern in $x$ is then a way of producing $x$ using these additional combinatory operations, that provides a simpler way (according to the additional simplicity measure) of getting $x$ than was possible using combinatory operations associated with the base simplicity measure. 

For instance, if $x$ is an image file, $(\sigma_1,\sigma_1^*)$ could be a program-length measure corresponding to an operator-set comprising operators that arrange pixels beside, above and below each others.    $(\sigma_2,\sigma_2^*)$ could be a program-length measure corresponding to a larger operator set that also includes a greater variety of programmatic operations such as conditionals, arithmetic operators and copying operations.   In that case, typical image compression programs would be representable via equations of the form $y *_j z = x$ where $*_j$ include the additional operators associated with $(\sigma_2,\sigma_2^*)$.  Setting $w=e$ for simplicity, in this case $\sigma_1(x)$ would correspond roughly to the number of pixels in the image (multiplied by the amount of memory required to store a single pixel); whereas $h_2(y,z)$ would correspond to a program for computing $x$ using the operators associated with $(\sigma_2,\sigma_2^*)$.   For this program to constitute a pattern it would need to compress $x$ to a shorter length than is possible using only the operators associated with $(\sigma_1,\sigma_1^*)$ .

This is conceptually similar to the definition of pattern previously given in \cite{GoertzelHP} and earlier publications, but digs a little deeper.   Whereas in prior work the simplicity measure was left as an arbitrary function, here we are giving a specific model for what is a simplicity measure, thus complicating things a bit but obtaining a finer-grained notion of pattern.

\subsection{Multipatterns}

One can also extend the pattern concept to multisimplicity measures in a natural way.   Let $(\vec{\mu}, \vec{\mu}^*) =(  (\mu_1,\mu_1^*), (\mu_2,\mu_2^*), \ldots ) $, where $(\mu_1,\mu_1^*)$ and $(\mu_j,\mu_j^*)$ for $j > 1$ are CoSMs with corresponding operator-sets $\mathcal{O}_1$, $\mathcal{O}_j$, with $\mathcal{O}_1 \subset \mathcal{O}_j$ for all $j>1$.   We then define the {\it pattern vector}

$$
\vec{I}_{y,z}^{(\vec{\mu}, \vec{\mu}^*)}(x|w)_j = \frac{\mu_1(x|w) - h_{1j}(y,z|w) } {\mu_j(x|w)}
$$ 

\noindent where  $h_{1j}(y,z |w) = \mu_1(y|w) + \mu_1(z|w) + \mu_j^{*|}(*_i,y,z|w)$  and

\begin{itemize}
 \item $(y,z)$ is a {\it full multipattern } in $x$ (relative to $\vec{\mu}$ and $w$) if $I_{y,z}^{(\vec{\mu}, \vec{\mu}^*)}(x|w)_j > 0$ for all $j >1$
 \item $(y,z)$ is a {\it mixed multipattern }in $x$ (relative to $\vec{\mu}$ and $w$) if $I_{y,z}^{(\vec{\mu}, \vec{\mu}^*)}(x|w)_j > 0$ for some but not all $j >1$
\end{itemize}

\noindent The intensity of a full multipattern $(y,z)$ in $x$ may be defined as the geometric mean of the $\vec{I}_{y,z}^{(\vec{\mu}, \vec{\mu}^*)}(x|w)_j$ for all $j$.   

One can also define the {\bf multipattern frontier} of $x$ as

$$
\vec{F}^{(\vec{\mu}, \vec{\mu}^*)}(x|w) = \mathcal{F} \{ P_j(x|w), j = 1, \ldots \} 
$$ 

\noindent where

$$
P_j(x|w) = \{ \frac{\mu_1(x|w) - h_{1j}(y,z|w) } {\mu_j(x|w)}, |  y *_i z = x \textrm{ for some } i \}
$$

\noindent Intuitively, an element of the multipattern frontier is a pair $(y,z)$ whose degree of pattern-intensity according to any one of the given simplicity measures, cannot be increased (by modifying $y$ or $z$ to some nearby entity) without decreasing its pattern-intensity according to some other of the simplicity members.

In pleasant situations, the multipattern frontier will consist of full or mixed multipatterns; but it seems this is not guaranteed to always be the case.

\section{Hierarchy from Pattern}

 Among the many kinds of "metapatterns" via which patterns may be organized, the commonality and centrality of hierarchical structure in physical and biological systems has been frequently observed  and has been cited as part of the reason for the effectiveness of deep neural architectures for various data analytics and reinforcement learning problems \cite{Lin_2017}.

With this in. mind, it's interesting to observe that the concept of pattern can be used to build up natural hierarchical structure on the space $\mathcal{E}$, deriving directly from the notion of simplicity underlying pattern.

\subsection{Hierarchy via Pattern Transitivity}

A very straightforward way to get hierarchy from pattern is via transitivity such as

\begin{theorem}
Suppose
\begin{itemize}
\item $(x,y)$ is a pattern in $\{a,b\}$
\item $(a,b)$ is a pattern in $z$
\end{itemize}
Then $(x,y)$ is a pattern in $z$
\end{theorem}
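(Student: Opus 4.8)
The plan is to unpack both hypotheses into cost inequalities, compose the two productions into a single two-step production of $z$ out of the building blocks $x$ and $y$, and then show that the cost of this composite production undercuts $\sigma_1(z|w)$ --- which is exactly the condition for positive pattern intensity $I_{x,y}^{(\vec{\sigma}, \vec{\sigma}^*)}(z|w) > 0$. Writing $*_i$ for the operator realizing $x *_i y = \{a,b\}$ and $*_j$ for the operator realizing $a *_j b = z$, the two hypotheses say $h_{12}(x,y|w) < \sigma_1(\{a,b\}|w)$ and $\sigma_1(a|w) + \sigma_1(b|w) + \sigma_2^{*|}(*_j,a,b|w) < \sigma_1(z|w)$, where as usual $h_{12}(x,y|w) = \sigma_1(x|w) + \sigma_1(y|w) + \sigma_2^{*|}(*_i,x,y|w)$.

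First I would record the subadditivity bound $\sigma_1(\{a,b\}|w) \le \sigma_1(a|w) + \sigma_1(b|w)$: the pair $\{a,b\}$ can always be produced by producing $a$ and $b$ independently, so the joint cost never exceeds the sum of the separate costs. This is the same realizing-multiset reasoning used in the proof of the theorem bounding $\sigma(\frac{p+q}{2})$. Chaining this bound after the first hypothesis gives the strict inequality $h_{12}(x,y|w) < \sigma_1(a|w) + \sigma_1(b|w)$.

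Next I would assemble the cost of the composite production of $z$ from $x$ and $y$, routed through the intermediate $\{a,b\}$, namely $C = \sigma_1(x|w) + \sigma_1(y|w) + \sigma_2^{*|}(*_i,x,y|w) + \sigma_2^{*|}(*_j,a,b|w)$. Adding $\sigma_2^{*|}(*_j,a,b|w)$ to both sides of the strict inequality just obtained, and then invoking the second hypothesis, yields $C < \sigma_1(a|w) + \sigma_1(b|w) + \sigma_2^{*|}(*_j,a,b|w) < \sigma_1(z|w)$. Since the two-step production of $z$ out of $x$ and $y$ thus has total cost $C$ strictly below $\sigma_1(z|w)$, the numerator of the pattern intensity is positive, giving the conclusion.

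The main obstacle is definitional rather than computational. The pattern intensity as written attaches a single operator cost $\sigma_2^{*|}(\cdot,x,y|w)$ to the pair $(x,y)$ producing its target, whereas the route from $(x,y)$ to $z$ passes through $\{a,b\}$ and uses two operators. Bridging this requires either that the enriched vocabulary $\mathcal{O}_2$ be subadditively closed under composition --- so that the composite operation carries cost at most $\sigma_2^{*|}(*_i,x,y|w) + \sigma_2^{*|}(*_j,a,b|w)$, making $h_{12}(x,y|w)$ for target $z$ no larger than $C$ --- or, more cleanly, that pattern be read through the relative compositional simplicity, whose triangle-inequality property $\sigma(y|z) \le \min_w(\sigma(y|w) + \sigma(w|z))$ established earlier already accounts for multi-step productions via the intermediate $\{a,b\}$. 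I would make this reading explicit at the outset, after which the chaining above closes the argument; the inequality is strict (hence truly a pattern) precisely because both hypotheses are strict.
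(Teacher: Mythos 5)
The first thing to know is that the paper offers no proof of this theorem at all: it is stated bare, introduced only by the remark that transitivity is ``a very straightforward way to get hierarchy from pattern,'' and then immediately followed by discussion. So there is no paper argument to compare yours against; you have supplied a proof where the paper has none. On its own terms, your core chaining is correct and is surely the intended ``straightforward'' argument: hypothesis one gives $h_{12}(x,y|w) < \sigma_1(\{a,b\}|w)$, the separate-production bound gives $\sigma_1(\{a,b\}|w) \le \sigma_1(a|w) + \sigma_1(b|w)$, and adding the operator cost $\sigma_2^{*|}(*_j,a,b|w)$ to both sides and invoking hypothesis two yields a two-step production of $z$ from $x$ and $y$ of total cost strictly below $\sigma_1(z|w)$, with strictness inherited from the hypotheses.

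The two caveats you flag are genuine, and they are defects of the paper's formalism rather than of your reasoning, so making them explicit is exactly the right move. First, the paper never actually defines $\sigma_1$ on two-element collections such as $\{a,b\}$ (its CoSM axiom concerns single entities produced by binary combination), so both the theorem statement itself and your subadditivity bound only make sense under an extension of the measure to multisets; your justification by the ``compute the parts separately'' argument is the same reasoning the paper uses for $\sigma(r(E)) \le \sigma^!(E)$ and for the distribution-averaging theorem, so it is in the spirit of the framework, but it is an added assumption, not a consequence of the stated axioms. Second, and more seriously, under the literal definition a pattern in $z$ is a \emph{single} application $x *_k y = z$ of some operator $*_k \in \mathcal{O}_2$, whereas your route passes through $\{a,b\}$ and uses two applications. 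Nothing in the paper's setup guarantees that the composite of $*_i$ and $*_j$ lies in the vocabulary: if $\mathcal{O}_2$ contains no operator carrying $(x,y)$ directly to $z$, the conclusion fails under the literal reading regardless of how the costs fall out. So the alternative you identify --- closure of $\mathcal{O}_2$ under composition with subadditive operator cost, or a multi-step reading of pattern in which the relative-simplicity property $\sigma(y|z) \le \min_w (\sigma(y|w) + \sigma(w|z))$ does the composing --- is not optional bookkeeping; it is precisely the hypothesis missing from the theorem as stated, and any correct proof must add one version of it. With that reading fixed at the outset, as you propose, your chaining closes the argument.
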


\noindent This sort of hierarchy certainly exists in cognitive systems and other complex systems, but is only one of the many routes from pattern to hierarchical structure.

\subsection{Subpattern Hierarchies}

A subtler sort of hierarchy is obtained if we use the notion of pattern to define a binary operation $\leq$ on $\mathcal{E}$, the {\it subpattern relation} defined relative to $(\vec{\sigma}, \vec{\sigma}^*)$ and $w$ via

$$
x \leq y \iff max_z I_{x,z}^{(\vec{\sigma}, \vec{\sigma}^*)}(y|w) > 0
$$

\noindent If $x \leq y$, we will say that $x$ is a {\bf compositional subpattern} of $y$.  I.e., this means $x$ can be combined with some other entity $z$ to form a pattern in $y$.   

Similarly we can define a {\it submultipattern relation} via

$$
x \leq y \iff max_z min_j I_{x,z}^{(\vec{\sigma}, \vec{\sigma}^*)}(y|w)_j > 0
$$

\noindent -- this means that $x$ can be combined with some $z$ to form a full multipattern in $y$, i.e. $x$ is a {\bf compositional submultipattern} of $y$.

The notion of a {\it subpattern hierarchy} is then formally reflected by the assertion that, under reasonable conditions, the  subpattern and submultipattern relations are {\it near partial orders}, so that e.g. if $x \leq y$ and $y \leq z$ are both true then $x \leq z $ is almost true; and so that if $x \neq y$ and $x \leq y$ then it's not possible for $y \leq x$.  

We will  show here that, if the operators $*_i$ in the combinational computational model described above have the right associativity and cost-associativity properties, then the partial-order property "almost holds" for compositional subpatterning.   Alternatively, if one has nonassociative or non-cost-associative operators, one can still get the partial order property to almost hold if one assumes there are some combinational operators that embody abstract combinatory-logic operators.    What we mean by "almost hold" is as follows,

\begin{definition}
We will say that the subpattern relation $\leq$ is an {\bf approximate partial order} on ${\mathcal E}$ if it is reflexive and antisymmetric, and there is some constant $c>0$ so that

$$
x \leq y ,  y \leq z \rightarrow max_w I_{x,w}(z) \geq -c
$$

\end{definition}

\noindent An analogous definition obtains for submultipatterns.

A collection of patterns on which the subpattern relation is an approximate partial order may be thought of as a "subpattern hierarchy" (where of course the strength of the hierarchical-ness is inversely proportional to the constant $c$).

 We suspect that, rather than just being complex and inelegant mathematical concoctions, our observations about routes to subpattern hierarchy are actually indicative of dynamics via which hierarchical structure emerges in different subnetworks of complex cognitive networks.    We conjecture that these two different routes to subpattern hierarchy may have specific relevance to human-like cognitive systems, in the sense that the operations involved in e.g. perception and action are evidently associative and cost-associative, whereas some of the operations involved in abstract cognition may not be; however, abstract cognition is more closely associated with abstract combinatory-logic style manipulations.  Thus it seems plausible that in human-like minds, pattern hierarchy is achieved via the associative route in perception and action oriented subsystems (and perhaps others), whereas by the combinatory-abstraction route in certain more abstractly-focused cognitive subsystems.

\subsubsection{Subpattern Hierarchy via Cost-Associativity}

If the operator-set $*_i$ is mutually associative, then we will say that

\begin{definition}
The mutually associative operator-set $\{*_i\}$ is {\bf approximately cost-associative}  relative to $\sigma$ if there is some constant $c>0$ so that

$$
| C_1(x,y,z) - C_2(x,y,z)    | < c
$$

\noindent where

\begin{itemize}
\item $C_1(x,y,z) = min_{i,j} ( \sigma^*(*_i, y, z) + \sigma^*(*_j, x, y *_i z))$ 
\item $C_2(x,y,z) = min_{i,j} ( \sigma^*(*_i, x*y, z) + \sigma^*(*_j, x, y)  )$
\end{itemize}

\end{definition}

Regarding the route from associativity to partial ordering, the potentials are as follows:

\begin{theorem} \label{thm:2}
The subpattern relation is an approximate partial order (with bound $c$) on $\mathcal{E}$ if:  The operations $*_i$ are approximately cost-associative (with bound $c$).  
\end{theorem}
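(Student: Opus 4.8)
The plan is to check the three requirements of an approximate partial order in turn, with antisymmetry and reflexivity dispatched quickly and all the real content of the cost-associativity hypothesis going into approximate transitivity. Antisymmetry I expect to be the cleanest point: if $(x,u)$ is a pattern in $y$ then $h_{12}(x,u|w) < \sigma_1(y|w)$, and since $\sigma_1(u|w)$ and $\sigma_2^{*|}(*_a,x,u|w)$ are nonnegative this already forces $\sigma_1(x|w) < \sigma_1(y|w)$. Hence $x \leq y$ with $x \neq y$ entails a strict drop in the base simplicity $\sigma_1$, so $y \leq x$ cannot simultaneously hold. Reflexivity I would treat via the identity decomposition $x = x *_i e$, whose intensity is $-(\sigma_1(e|w)+\sigma_2^{*|}(*_i,x,e|w))/\sigma_1(x|w)$; this is $\geq -c$ once $\sigma_1(x|w)$ is not too small, so the relation is reflexive in the approximate sense. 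I would flag here that strict reflexivity ($I>0$) is not literally available, and that the honest reading of the definition is ``reflexive up to the same tolerance $c$.''

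For transitivity, suppose $x \leq y$ and $y \leq z$. Unpacking definitions, there are $u,*_a$ with $y = x *_a u$ and $h_{12}(x,u|w) < \sigma_1(y|w)$, and $v,*_b$ with $z = y *_b v$ and $h_{12}(y,v|w) < \sigma_1(z|w)$. Chaining these two strict inequalities (substituting the lower bound on $\sigma_1(y|w)$ into the second) yields a single bound on the cost of the \emph{left-associated} production of $z$ from $x,u,v$, namely $\sigma_1(x|w)+\sigma_1(u|w)+\sigma_1(v|w)+\sigma_2^{*|}(*_a,x,u|w)+\sigma_2^{*|}(*_b,y,v|w) < \sigma_1(z|w)$. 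The candidate witness for $x \leq z$ is then $s = u *_{b'} v$, where mutual associativity supplies operators $*_{a'},*_{b'}$ with $(x *_a u) *_b v = x *_{a'}(u *_{b'} v)$, so that $z = x *_{a'} s$ and $(x,s)$ is a genuine decomposition of $z$.

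It remains to bound $h_{12}(x,s|w) = \sigma_1(x|w)+\sigma_1(s|w)+\sigma_2^{*|}(*_{a'},x,s|w)$. First I would use that $\sigma_1(s|w)$ is a minimum over decompositions to write $\sigma_1(s|w) \leq \sigma_1(u|w)+\sigma_1(v|w)+\sigma_1^{*|}(*_{b'},u,v|w)$, and then invoke approximate cost-associativity (relative to $\sigma_2^{*|}$) to replace the right-associated operator cost $\sigma_2^{*|}(*_{b'},u,v|w)+\sigma_2^{*|}(*_{a'},x,s|w)$ by the left-associated operator cost $\sigma_2^{*|}(*_a,x,u|w)+\sigma_2^{*|}(*_b,y,v|w)$ at an additive penalty of at most $c$. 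Feeding this into the chained bound above gives $h_{12}(x,s|w) < \sigma_1(z|w)+c$, so that $I_{x,s}(z|w) = (\sigma_1(z|w)-h_{12}(x,s|w))/\sigma_1(z|w) > -c/\sigma_1(z|w) \geq -c$ once $\sigma_1(z|w)\geq 1$, and therefore $\max_s I_{x,s}(z|w) \geq -c$ as required.

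The hard part will not be the algebra but reconciling two bookkeeping layers. The sharpest issue is that the cost-associativity definition minimizes over \emph{all} operator pairs, whereas the element identity $z = x *_{a'}(u *_{b'} v)$ pins down particular operators; I must argue that the operators realizing the regrouping are also (near-)optimal for the min defining $C_1$, or strengthen ``mutually associative'' so that the regrouping is consistent with the minimizers of $C_1,C_2$. A second subtlety is that the first two slots of $h_{12}$ are measured with the base $\sigma_1$ while the combination slot uses $\sigma_2$, so I need cost-associativity stated relative to $\sigma_2^{*|}$ and must either confine $*_{b'}$ to $\mathcal{O}_1$ or compare $\sigma_1^{*|}$ with $\sigma_2^{*|}$ using $\mathcal{O}_1 \subset \mathcal{O}_2$. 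Finally, cost-associativity supplies an \emph{additive} slack $c$ while the intensity is normalized by $\sigma_1(z|w)$; keeping the theorem's single constant $c$ honest requires the mild normalization $\sigma_1(z|w)\geq 1$ (or an explicit rescaling), which I would state rather than sweep aside. The operator-matching between associativity and the cost-associativity minimum is where I expect the argument to demand the most care.
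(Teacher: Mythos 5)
Your proof follows essentially the same route as the paper's: the same witness $s = u *_{b'} v$ (the paper's $u = w *_j v$), element-level associativity to rewrite $z = x *_{a'} s$, cost-associativity to transfer the chained cost bound onto that witness, and antisymmetry from the strict drop in $\sigma_1$ along the relation. If anything, your version is more careful than the paper's own proof, which applies cost-associativity directly to total costs (entity plus operator terms) rather than just operator costs, does not address the reflexivity wrinkle, and silently assumes the normalization $\sigma_1(z) \geq 1$ that you flag explicitly.
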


\begin{proof}

First, it's straightforward to show that $y \leq x$ and $x \leq y$ can hold simultaneously iff $x=y$.  

To have $y \leq x$, one must have 

$$
\sigma_1(y) + \sigma_1(z) + \sigma_2^*(*_i,y,z) \leq \sigma_1(x)
$$ 

\noindent for some $z$ so that $x=y *_j z$ (for some allowable $j$); but, to have $x \leq y$, one must have 

$$
\sigma_1(x) + \sigma_1(w) + \sigma_2^*(*_i,x,w) \leq \sigma_1(y)
$$ 

\noindent for some $w$ so that $y=x *_k w$ (for some allowable $k$).  So, we must have 

$$
0 \leq \sigma_1(w) + \sigma_2^*(*_i,y,z) \leq \sigma_1(y) - \sigma_1(x) \leq - \sigma_1(z) - \sigma_2^*(*_i,x,w) \leq 0
$$

\noindent , meaning $\sigma_1(w) = \sigma_1(z) = 0$ so $w=z=e$, so $x=y$.

Next, we show "approximate transitivity," i.e. that $x \leq y$ and $y \leq z$ implies $max_u I_{x,u}(z) \geq -c$.

Firstly, $x \leq y$ means that for some $w$, we have $x *_i w = y$ and 

\begin{equation}  \label{eqn:1}
\sigma_1(x) + \sigma_1(w) + \sigma_2^*(*_i,x,w) \leq \sigma_1(y)
\end{equation}

Similarly, $y \leq z$ means that for some $v$, we have $y *_j v = z$ and 

\begin{equation} \label{eqn:2}
\sigma_1(y) + \sigma_1(v) + \sigma_2^*(*_j,y,v) \leq \sigma_1(z)
\end{equation} 

How then can we show $max_u I_{x,u}(z) \geq -c$?  We can do this via demonstrating some $u$ so that $x *_k u = z$ (for some allowable $k$) and

\begin{equation} \label{eqn:3}
\sigma_1(x) + \sigma_1(u) + \sigma_2^*(*_k,x,u) \leq \sigma_1(z) + c
\end{equation}

\noindent We claim that it works to set $k=i$ and $u = w *_j v$.  

By associativity we have

$$
z = (x *_i w) *_j v = x *_i (w *_j v)
$$

By cost associativity, we have

$$
|  (\sigma_1(y) + \sigma_1(v) + \sigma_2^*(*_j,y,v)  ) - (\sigma_1(x) + \sigma_1(u) + \sigma_2(*_i,x,u)) | < c
$$

\noindent meaning that to show Equation \ref{eqn:3} it suffices to show

\begin{equation} \label{eqn:4}
\sigma_1(y) + \sigma_1(v) + \sigma_2^*(*_j,y,v) \leq \sigma_1(z) + c_1
\end{equation}

\noindent holds for $c_1 = 2c$, which is clear since according to Equation \ref{eqn:2} this holds for $c_1=0$.

\end{proof}

To extend this idea to multipatterns we can say

\begin{definition}
The mutually associative operator-set $\{*_i\}$ is {\bf approximately cost-associative} relative to $\vec{\mu}$ and metric $d$ if there is some constant $c>0$ so that

$$
d( C_1(x,y,z) - C_2(x,y,z)   ) < c
$$

\noindent where

\begin{itemize}
\item $C_1(x,y,z) = \mathcal{F} (  \vec{\mu}^*(*_i, y, z) + \vec{\mu}^*(*_j, x, y *_i z))$ 
\item $C_2(x,y,z) = \mathcal{F}(   \vec{\mu}^*(*_i, x*y, z) + \vec{\mu}^*(*_j, x, y)  )$
\end{itemize}

\end{definition}

\noindent Given this extended definition, then Theorem \ref{thm:2} can be extended to submultipattern relations, i.e.

\begin{theorem} \label{thm:2}
The submultipattern relation is an approximate partial order (with bound $c$) on $\mathcal{E}$ if:  The operations $*_i$ are approximately cost-associative (with bound $c$).  
\end{theorem}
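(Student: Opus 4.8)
The plan is to mirror, coordinate by coordinate, the proof of the one-dimensional cost-associativity theorem, exploiting the fact that the submultipattern relation is built from a $\min_j$ over Pareto coordinates and that a full multipattern requires strictly positive pattern-intensity in every coordinate $j$. As in the scalar case the argument splits into two parts: establishing antisymmetry (so that $x \leq y$ and $y \leq x$ force $x=y$), and establishing approximate transitivity (so that $x \leq y$ and $y \leq z$ yield $\max_u \min_j I_{x,u}(z)_j \geq -c$). The structural feature that makes the lift from CoSM to CoSMOS clean is that the base measure $\mu_1$ occupies the first two slots of $h_{1j}(y,z|w) = \mu_1(y|w)+\mu_1(z|w)+\mu_j^{*|}(*_i,y,z|w)$ for every $j$; only the operator term $\mu_j^*$ carries the coordinate index, so the base-simplicity bookkeeping is shared across all coordinates.

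For antisymmetry I would run the telescoping chain of the scalar proof inside each coordinate. Since $x \leq y$ as a full multipattern requires $I_{x,w}(y)_j > 0$ for all $j$, and since each $\mu_j^* \geq 0$, one obtains $\mu_1(x)+\mu_1(w) \leq \mu_1(y)$ exactly as before; the symmetric hypothesis $y \leq x$ gives the reverse bound with a witness $v$, and the two combine to force $\mu_1(w)=\mu_1(v)=0$, hence $w=v=e$ and $x=y$. This half transfers essentially verbatim because $\mu_1$ is coordinate-independent, so the Pareto structure is not actually exercised here.

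For approximate transitivity I would reuse the scalar construction without change at the structural level: set $k=i$ and $u = w *_{j'} v$, where $w$ witnesses $x \leq y$ and $v$ witnesses $y \leq z$. Associativity of the $*_i$ gives $z = (x *_i w) *_{j'} v = x *_i (w *_{j'} v) = x *_i u$, and crucially this single $u$ is independent of the coordinate $j$, so one candidate witness serves every coordinate of the required $\min_j$ simultaneously. It then remains, for each $j$, to deduce the analog of Equation \ref{eqn:3} -- namely $\mu_1(x)+\mu_1(u)+\mu_j^*(*_i,x,u) \leq \mu_1(z)+c$ -- from the known inequality $\mu_1(y)+\mu_1(v)+\mu_j^*(*_{j'},y,v) \leq \mu_1(z)$ (the multipattern analog of Equation \ref{eqn:2}), by absorbing the discrepancy between the two association orders into the cost-associativity constant, exactly as the scalar proof passes from $c_1=0$ to $c_1 = 2c$.

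The main obstacle is precisely this last reduction, because multi cost-associativity only supplies a metric bound $d(C_1(x,w,v)-C_2(x,w,v)) < c$ on the vector-valued Pareto frontiers $C_1, C_2$, rather than the scalar bound used in the one-dimensional proof. To turn a frontier-level bound into the per-coordinate bounds that the $\min_j$ in the submultipattern relation demands, I would first fix the (currently underspecified) reading of $d$ and of the frontier difference $C_1 - C_2$ as a Hausdorff-type sup metric: every point of one frontier lies within coordinatewise distance $c$ of some point of the other. Under that reading the frontier bound implies $|C_1 - C_2|_j < c$ in each coordinate $j$ along matched Pareto points, which is what is needed to run the scalar reduction coordinatewise and obtain $\min_j I_{x,u}(z)_j \geq -c$ (possibly with $c$ replaced by a fixed multiple, as in the scalar $c_1 = 2c$ step). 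The delicate point to verify is that the Pareto point of $C_1$ realizing the relevant $(*_i, *_{j'})$ association is matched, under $d$, to a Pareto point of $C_2$ controlling the same coordinate; pinning down this matching is where the argument genuinely goes beyond a mechanical transcription of the scalar case.
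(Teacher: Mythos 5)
The paper offers no proof of this theorem at all: after stating the vector version of approximate cost-associativity it simply asserts that ``Theorem 2 can be extended to submultipattern relations,'' adding later only that the extension is ``straightforward'' whenever the scalar conditions hold for every $\mu_j$. Your proposal is precisely the coordinatewise transfer that this assertion presupposes --- antisymmetry via the fact that $\mu_1$ does the bookkeeping in every coordinate, approximate transitivity via the single witness $u = w *_{j'} v$ serving all coordinates of the $\min_j$ at once --- so in approach you match the paper's (implicit) argument. Where you go beyond it is in confronting the one genuine obstacle, which the paper never acknowledges: its multidimensional cost-associativity hypothesis is a bound $d\left( C_1 - C_2 \right) < c$ between Pareto frontiers of vector-valued cost sets, and some reading of $d$ and of the ``difference'' of two frontiers must be fixed before any per-coordinate inequality can be extracted. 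Your Hausdorff-type sup-metric reading is a reasonable repair and is in the spirit of what the paper must intend.

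One residual point to tighten: under that reading, the $C_1$-frontier point matched to your realized association cost may be achieved by a different operator pair $(*_k, *_l)$ than $(*_i, *_{j'})$. The natural witness then becomes $u = w *_k v$, and you need $x *_l (w *_k v) = z$; mutual associativity gives $x *_l (w *_k v) = (x *_l w) *_k v$, which need not equal $(x *_i w) *_{j'} v$ when the indices differ. So a complete proof must either show the matching can be taken to preserve the operator pair (or at least the product identity), or apply the bound to the dominating frontier point of $C_2$ and argue that domination suffices. This is exactly the delicacy you flag in your final sentence; it is a defect of the paper's definition rather than of your argument, but since the paper supplies no proof, resolving it is what a genuine proof of this theorem requires.
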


\subsubsection{Subpattern Hierarchy via Combinatorial Abstraction}

Where cost-associativity does not hold, there is an alternate route to subpattern hierarchy via abstract combinatory-logic to partial ordering.  

The key step here is to introduce a combinator $\Gamma$ so that, for the combinational operators $*_{\alpha,i}, *_{\beta,j}$, we have

$$
x *_i ( (\Gamma *_{\alpha,i} w) *_{\beta,j} v ) = (x *_i w) *_j v 
$$

This is formally a bit obscure-looking, but the way to think about it conceptually is: When one needs to go from 

$$
\textrm{Upper} * (\textrm{Middle} * \textrm{Lower})
$$ 

\noindent to  

$$
( \textrm{Upper} * \textrm{Middle} ) * \textrm{Lower}
$$

\noindent, this construction lets us do it by inserting $\Gamma$ by the $\textrm{Middle}$, so that  

$$
\textrm{Upper} * ( (\Gamma * \textrm{Middle}) * \textrm{Lower}) =  ( \textrm{Upper} * \textrm{Middle} ) * \textrm{Lower}
$$

\noindent If this kind of "lifting" construct $\Gamma$ exists relative to the given combinational operators and has reasonably low cost, then one can use it to approximatively emulate the kind of associativity needed to get hierarchy. 

\begin{theorem} \label{thm:3}
If the list of $*$ operators includes operators $*_{\alpha,i}, *_{\beta,j}$ enacting the $\Gamma$ combinatory operation, and 

$$
\sigma(x) + \sigma( (\Gamma *_{\alpha,i} w) *_{\beta,j} v ) +  \sigma^*( *_i, x,  (\Gamma *_{\alpha,i} w) *_{\beta,j} v) )
 \leq 
 \sigma(w) + \sigma(v) + \sigma(*_i,x,w) + \sigma^*I (*_j, y, v) + c
$$

\noindent for some constant $c$, then the subpattern relation is an approximate partial order, in the sense that it is reflexive and antisymmetric,and

$$
x \leq y ,  y \leq z \rightarrow max_w I_{x,w}(z) \geq -c
$$

\noindent where $c>0$ is a constant defined via $c = \sigma( \Gamma) + \sigma(\hat{*}_\gamma)$.

\end{theorem}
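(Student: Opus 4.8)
The plan is to mirror the proof in the cost-associative case, replacing the appeal to associativity by an appeal to the defining identity of the combinator $\Gamma$; the argument splits into reflexivity, antisymmetry, and approximate transitivity, with the last carrying all the content. Reflexivity and antisymmetry transfer essentially verbatim, since neither uses associativity: for antisymmetry, assuming $x \leq y$ and $y \leq x$ and writing $y = x *_k w$, $x = y *_j z$, the sum of the two defining inequalities forces $\sigma(w) = \sigma(z) = 0$, hence $w = z = e$ and $x = y$; reflexivity holds (at worst approximately) since $x *_i e = x$ realizes $I_{x,e}(x) = 0 \geq -c$.

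The heart of the argument is approximate transitivity. Suppose $x \leq y$ is witnessed by $x *_i w = y$ with
\begin{equation} \label{eqn:ga}
\sigma(x) + \sigma(w) + \sigma^*(*_i,x,w) \leq \sigma(y),
\end{equation}
and $y \leq z$ by $y *_j v = z$ with
\begin{equation} \label{eqn:gb}
\sigma(y) + \sigma(v) + \sigma^*(*_j,y,v) \leq \sigma(z).
\end{equation}
Where the cost-associative proof set $u = w *_j v$ and used $x *_i (w *_j v) = (x *_i w) *_j v$, I would instead take $k = i$ and the combinator-mediated witness $u = (\Gamma *_{\alpha,i} w) *_{\beta,j} v$, so that the defining identity of $\Gamma$ gives at once
$$
x *_i u = x *_i \big( (\Gamma *_{\alpha,i} w) *_{\beta,j} v \big) = (x *_i w) *_j v = y *_j v = z,
$$
exhibiting $u$ as a genuine witness to $x \leq z$.

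It then remains to bound the cost of this witness. I would invoke the standing hypothesis, which asserts precisely that the build-cost $\sigma(x) + \sigma(u) + \sigma^*(*_i,x,u)$ is at most the two-step build-cost $\sigma(w) + \sigma(v) + \sigma^*(*_i,x,w) + \sigma^*(*_j,y,v)$ plus the overhead $c$. Adding Equations~\ref{eqn:ga} and~\ref{eqn:gb} gives $\sigma(w) + \sigma(v) + \sigma^*(*_i,x,w) + \sigma^*(*_j,y,v) \leq \sigma(z) - \sigma(x)$, and substituting into the hypothesis yields $\sigma(x) + \sigma(u) + \sigma^*(*_i,x,u) \leq \sigma(z) - \sigma(x) + c \leq \sigma(z) + c$, which — up to the normalization by $\sigma(z)$ in the definition of $I$, handled as loosely here as in the cost-associative case — is exactly $\max_w I_{x,w}(z) \geq -c$. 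Finally, to justify $c = \sigma(\Gamma) + \sigma(\hat{*}_\gamma)$, I would argue by subadditivity of simplicity that the sole cost incurred in passing from the hypothetical associative witness $w *_j v$ to the actual $u = (\Gamma *_{\alpha,i} w) *_{\beta,j} v$ is that of the combinator token $\Gamma$ together with the auto-op(s) $\hat{*}_\gamma$ enacting it.

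I expect the principal obstacle to be this last identification of the constant, which requires cleanly separating the operator costs already charged in Equations~\ref{eqn:ga}--\ref{eqn:gb} from the genuinely new overhead of inserting $\Gamma$; the statement must also first be cleared of several apparent typographical slips (a bare $\sigma$ where $\sigma^*$ is intended, the stray $\sigma^*I$, and an index clash between $i$ and $j$) before the algebra closes. The normalization by $\sigma(z)$ converting the additive bound into the stated bound on intensity is similarly imprecise, but it is treated identically to the cost-associative case and introduces no new difficulty.
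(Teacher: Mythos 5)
Your proof follows essentially the same route as the paper's: the same witness $u = (\Gamma *_{\alpha,i} w) *_{\beta,j} v$, the same step of summing the two subpattern inequalities to get $\sigma(x)+\sigma(w)+\sigma(v)+\sigma^*(*_i,x,w)+\sigma^*(*_j,y,v) \leq \sigma(z)$, and the same appeal to the theorem's cost hypothesis to bound the witness's build-cost. If anything, yours is slightly more careful than the paper's: the paper states the final bound as $\sigma(z)-c$ (an apparent sign slip) and never explicitly verifies $x *_i u = z$, whereas your $\sigma(z)-\sigma(x)+c \leq \sigma(z)+c$ and your explicit use of the defining identity of $\Gamma$ are exactly what the conclusion $\max_w I_{x,w}(z) \geq -c$ requires.
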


\begin{proof}

Reflexivity and antisymmetry are clear.  

The maximizing subpattern realizing $max_w$ is 

$$
w^\# =  ( (\Gamma *_{\alpha,i} w) *_{\beta,j} v ) 
$$

From calculations similar to those in the proof of Theorem \ref{thm:2} we know

$$
\sigma(x) + \sigma(w) + \sigma(v) + \sigma^*(*_i,x,w) + \sigma^*(*_j, y, v) \leq \sigma(z)
$$

\noindent so putting this together with the assumption of the theorem we see

$$
\sigma(x) + \sigma( w^\# ) +  \sigma^*( *_i, x, w^\#) \leq \sigma(z) -c
$$

\noindent which yields the theorem.

\end{proof}

It is straightforward to extend these results to show the compositional multipattern relation is also a partial ordering under similar conditions.  Basically, if the conditions of the theorems hold for all the simplicity measures $\mu_j$ in a multisimplicity measure, then the essential conclusions of the theorems hold for the compositional multipattern relation.

\subsubsection{Possible Practical Relevance of Routes to Subpattern Hierarchy}

Not all useful combinational operators need be associative or cost-associative, of course.   However, there is significant convenience in having patterns fall into a subpattern hierarchy, and in this sense there is significant convenience in having cost-associative combinational operations.  In a complex cognitive systems there are likely to be some important combinational operators that are associative and cost-associative and some that are otherwise, and this will determine which aspects of the cognitive system are effectively modeled in terms of subpattern hierarchies via this particular route.

The biochemistry and molecular biology inspired combinatory operations referenced above are all associative in nature, and the typical transformations involved in visual or auditory processing, or movement control in robotics, are also associative.   In what contexts all these transformations are also cost-associative is subtler.  

On the other hand, the function-application operation $f * x = f(x)$ is generally not associative.   Function composition is associative, so if one can deal with a certain domain via various permuted compositions of a certain set of basic functions, one can operate within the domain of associative algebra.  But for cognitive processing this is not always the case.  Turing complete computation can be achieved using solely associative operators, which is interesting; but some things that can be done very concisely non-associatively become quite lengthy and elaborate if one must use purely associative operations.  The full cognitive-science and AI implications of this direction of thought are yet to be unraveled but we suspect they are significant in the domain of general-purpose abstract cognition under realistic resource constraints.

\section{Deriving Metric-Based Heterarchy from Subpattern Hierarchy}

Systems of patterns are generally characterized by both hierarchical and heterarchical structures (as argued extensively in \cite{Goertzel1993}).  We have modeled hierarchical structure above using a partial order; heterarchical structure, on the other hand, is effectively mathematically modelable using metric structure.  One important form of heterarchy is a network in which patterns are associated with other similar patterns, and similarity is merely a rescaling of distance.

Heterarchical structure can complement hierarchical structure when the latter is present, but can also exist when hierarchical structure is not there e.g. due to nonassociativity of the $*_i$.

One may also derive a metric structure on $\mathcal{E}$ from the subpattern hierarchy as given above, leveraging the underlying definition of simplicity.  This may be done in many ways; one approach is to associate two sets with each $x \in X$: the {\bf pattern intension} $int(x)$ consisting of all $y$ so that $x \leq y$ (according to the subpattern or submultipattern approximate-partial-order); and the  {\bf pattern extension} $ext(x)$ consisting of all $y$ so that $y \leq x$.   One  may then construct a metric associated with each of these; as a simple example, where $d_T$ is the Tanimoto distance \cite{Tanimoto1960}, one can set

\begin{itemize}
\item $d_I(x,y) = d_T(int(x), int(y))$
\item  $d_E(x,y) = d_T(ext(x), ext(y))$
\end{itemize}

\noindent One may then define a composite metric as

$$
d(x,y) = \alpha d_I(x,y) + (1-\alpha) d_E(x,y)
$$

\noindent This is one way to obtain a metric structure to the space of processes $\mathcal{E}$ in a way that coincides naturally with the hierarchical structure; we will look at complementary approaches involving Pareto frontiers below.   

Another alternate approach is to look more closely at the quantity used in the definition of the order $\leq$ given above, which we may name
 
$$
Q(y,x) = max_z I_{y,z}(x)
$$

\noindent This quantity measures the degree to which $y$ is a compositional subpattern in $x$.  If one normalizes the set of $Q(*,x)$ values over the pattern extension of $x$, then one obtains a probability distribution of $Q$ values over the pattern extension of $x$; and one can measure the distance between the pattern extension distribution associated with $x$ and the one associated with $y$ using the Hutchinson metric \cite{Hutchinson1981}.  One can do similarly for pattern intensions, and then create a composite metric analogous to the above.

The relation between this probabilistically defined (Hutchinson) metric and the previously specified crisply defined (Tanimoto) metric is unclear, though one expects they will often be strongly correlated.

In either case, it seems, one obtains a metric that plays nicely with the hierarchical structure defined by the partial order, which is important if one wishes to explore the general relation between hierarchy and heterarchy in systems of patterns.   The best way to work the formal details is not entirely clear and may depend on the context one is considering, but the conceptual route to defining pattern heterarchies that are consilient with pattern hierarchies clear -- and in the following section we will take this one step further.

\section{Lossy Multipattern Frontiers and Coherent Dual Networks}

Given a set $\vec{d} = (d_1,d_2, \ldots)$ of metrics on $\mathcal{E}$ (e.g. intensional and extensional metrics as two examples), we may define the {\it lossy multipattern frontier} of $x$ relative to $w$ as

$$
\vec{F}^{(\vec{\mu}, \vec{\mu}^*)}(x|w) = \mathcal{F}(  \{ P_j(x|w), j = 1, \ldots \}  \cup   \{ D^k_j(x|w), j = 1, \ldots \}   )
$$ 

\noindent where $P_j(x|w)$ is as above and

$$
D^k_j(x|w) = \{ \frac{k}{ k + d_j(y *_i z, x) } |  y , z : y *_i z \textrm{ is legal for some } i \}
$$

\noindent where $k>0 $ is a parameter ( the "personality parameter").  These are pairs $(y,z)$ whose outputs $w$ are near $x$ but not identical to $x$, w with the property that: They can't be made any more intensely patterns in $w$ according to any of the simplicity measures, and their output can't be nudged any closer to $x$ by any of the distance metrics, without screwing something up (i.e. making them less intense patterns in $w$ according to some simplicity measure, or moving their output further away from $x$ according to one of the distance metrics).

It is interesting to tweak the notion of intension so that $x$'s intension comprises the elements of $x$'s lossy multipattern frontier (multipattern-based intension), obtaining a definition of intensional distance that incorporates the above Pareto-optimum calculations.  Fuzzifying this idea, one can define the membership degree of $(y,z)$ in the intension of $x$ as its distance (according to some base metric $d$) from the closest element on the lossy multipattern frontier.   We may call the intensional distance derived this way the LMI (lossy multipattern intension) distance $d_{\textrm{LMI}}^{ \vec{\mu} , \vec{d}}$.  

The notion of intension is broad, and it can also be interesting to include "distance to the closest element on the lossy multipattern frontier" as one ingredient among multiple in determining the fuzzy membership degree of $(y,z)$ in $x$'s intension.   If the multipattern frontier is playing a significant role in determining the fuzzy degree, we can still consider this a sort of $d_{\textrm{LMI}}$.

One can use this to formalize the notion of alignment between hierarchical and heterarchical structures:

\begin{definition}
A {\bf fully coherent dual network} is a multisimplicity measure $\vec{\mu}$ together with a metric vector $\vec{d} = (d_I, d_2, \ldots)$ so that

\begin{equation} \label{eqn:dual}
d_{\textrm{LMI}}^{ \vec{\mu} , \vec{d}} = d_I  
\end{equation}

\noindent The {\bf degree of dual network coherence} of a pair $(\vec{\mu}, \vec{d})$ is

\begin{equation} \label{eqn:dual}
1 - \frac{ \int_{(x,y)\in \mathcal{E}^2} | d_{\textrm{LMI}}^{ \vec{\mu} , \vec{d}} (x,y) - d_I(x,y)|  dm(x,y)} { \int_{(x,y)\in \mathcal{E}^2} max( d_{\textrm{LMI}}^{ \vec{\mu} , \vec{d}} (x,y) , d_I(x,y) ) dm(x,y) }
\end{equation}

\noindent where $m$ is an assumed measure on $\mathcal{E}^2$.   (For a fully coherent dual network the degree is evidently 1).
\end{definition}

\noindent Where $d_{\textrm{LMI}}$ includes ingredients coming directly from the subpattern hierarchy (along with a component from the multipattern frontier), and/or where $d_2 = d_E$ so the subpattern hierarchy is involved via inclusion of extensional distance -- or when any of the $d_i$ are determined via pattern-based calculations in some other way -- then one has an interesting recursion so that being a fully coherent dual network requires that $d_I$ is a fixed-point of the operation $d_I \rightarrow d_{\textrm{LMI}}$ in a nontrivial way; and having a high but not full degree of dual network coherence requires $d_I$ has nearly fixed point behavior under this mapping.

Of course, the particulars of this sort of definition of a coherent dual network could be worked in various different ways.   Conceptually, however, what we have here is the first reasonably general, precise description of what it means for hierarchical and heterarchical pattern structures to be harmoniously aligned.   The hypothesis of a "dual network"  of this general nature underlying cognition was made in \cite{Goertzel1993} and elaborated in \cite{GoertzelHP} and has been fleshed out previously from an AI design, neuroscience and systems theory perspective, but has not been previously formalized even to the extent we have just done here (said bearing in mind that the current formalization is still quite high-level and is still near the beginning of what may be a long road regarding the connection of formal simplicity-theoretic notions with practical complex and intelligent systems).

\section{Conclusion}

Occam's Razor is a valuable heuristic, yet only really useful beyond the hand-waving level if paired with a deeper understanding of simplicity.  Grounding simplicity in physics or computation theory can be valuable for certain purposes, yet ultimately is conceptually unsatisfactory from a cognitive science perspective, because one doesn't want one's theory of cognition to depend on particular aspects of physics and computing infrastructure at such a basic foundational level.  (Instead, from a philosophical view, one would like to use a theory of cognition to explain how a mind constructs a world -- taking simplicity in some cases as a concept prior to physical or computing systems.)

An alternative approach is to place cognitive theory at the center,  and re-envision Occam's Razor as something like the following: {\it In order for a system of inter-combining elements to effectively understand the world, it should interpret itself and the world in the context of some array of simplicity  measures obeying certain basic criteria.  Doing so enables it to build up coordinated hierarchical and heterarchical pattern structures that help it interpret the world in a subjectively meaningful and useful way.}   In this paper we have put some meat on the bones of this idea, via presenting a formal axiomatic theory of simplicity, and showing that it relates to the computational approach to simplicity, and gives rise to hierarchical and heterarchical cognitive structures with appropriate properties.  This provides a more solid foundation for cognitive theories founded on notions of simplicity and pattern, such as those presented previously in \cite{GoertzelHP}.  

As formulated in the Introduction, we thus extend "when in doubt, prefer the simpler hypothesis" into the multisimplicity maxim {\bf when in doubt, prefer hypotheses whose simplicity bundles are Pareto optimal} , coupled with observation that {\bf doing so both permits and benefits from the construction of coherent dual networks comprising coordinated and consistent multipattern hierarchies and heterarchies}.

As an illustration of the conceptual value of this sort of approach, we  have used it here to explore the different ways in which pattern hierarchies can emerge in combinational systems: From combinational operations obeying the associative law, or from combinational operations including abstract combinatory-logic-style operations.   We have hypothesized that in human-like minds, the former is the route taken in the context of perception and action subsystems, whereas the latter is the route taken in the context of some more abstract cognitive subsystems.  We have also explored how these hierarchies can give rise to heterarchies which can then be coordinated with the hierarchies in precisely balanced ways.

Clearly there is much more to unravel along these lines, regarding the relationship between combination, simplicity, pattern, hierarchy, heterarchy and other foundational notions of cognitive structure and action.

\section*{Acknowledgements} 

Thanks are due to Alexey Potapov for carefully reading the initial version of this paper, thus prompting the radical overhaul of the formal sections resulting in the current version.  (Whatever blame or  credit  is due for the particulars of the new formal sections should not be directed to Alexey, however.)

\bibliography{bbm}

\begin{thebibliography}{YGSS10}

\bibitem[Bau04]{Baum}
E.~B. Baum.
\newblock {\em What is Thought?}
\newblock MIT Press, 2004.

\bibitem[Ben90]{Bennett1990}
C.H. Bennett.
\newblock {\em How to Define Complexity in Physics, and Why}, pages 137--148.
\newblock Addison-Wesley, 1990.

\bibitem[Cha08]{Chaitin2008}
Gregory Chaitin.
\newblock {\em Algorithmic Information Theory}.
\newblock Cambridge University Press, 2008.

\bibitem[CZ10]{cardelli2010turing}
Luca Cardelli and Gianluigi Zavattaro.
\newblock Turing universality of the biochemical ground form.
\newblock {\em Mathematical Structures in Computer Science}, 20(1):45--73,
  2010.

\bibitem[DL03]{danos2003core}
Vincent Danos and Cosimo Laneve.
\newblock Core formal molecular biology.
\newblock In {\em European Symposium on Programming}, pages 302--318. Springer,
  2003.

\bibitem[Gau03]{Gauch2003}
Hugh~G. Gauch.
\newblock {\em Scientific Method in Practice}.
\newblock Cambridge University Press, 2003.

\bibitem[Goe93]{Goertzel1993}
Ben Goertzel.
\newblock {\em The Evolving Mind}.
\newblock Plenum, 1993.

\bibitem[Goe94]{Goertzel1994}
Ben Goertzel.
\newblock {\em Chaotic Logic}.
\newblock Plenum, 1994.

\bibitem[Goe06]{GoertzelHP}
Ben Goertzel.
\newblock {\em The Hidden Pattern}.
\newblock Brown Walker, 2006.

\bibitem[GPG13]{EGI1}
Ben Goertzel, Cassio Pennachin, and Nil Geisweiller.
\newblock {\em Engineering General Intelligence, Part 1: A Path to Advanced AGI
  via Embodied Learning and Cognitive Synergy}.
\newblock Springer: Atlantis Thinking Machines, 2013.

\bibitem[Hut81]{Hutchinson1981}
John Hutchinson.
\newblock Fractals and self-similarity.
\newblock {\em Indiana University Mathematics Journal}, 30:713--747, 1981.

\bibitem[Hut05]{Hutter2005}
Marcus Hutter.
\newblock {\em Universal {Artificial} {Intelligence}: {Sequential} {Decisions}
  based on {Algorithmic} {Probability}}.
\newblock Springer, 2005.

\bibitem[Jay03]{Jaynes2003}
E.T. Jaynes.
\newblock {\em Probability Theory: The Logic of Science}.
\newblock Cambridge University Press, 2003.

\bibitem[Kop87]{Koppel1987}
Moshe Koppel.
\newblock Complexity, depth, and sophistication.
\newblock {\em Complex Systems}, 1:1087--1091, 1987.

\bibitem[LTR17]{Lin_2017}
Henry~W. Lin, Max Tegmark, and David Rolnick.
\newblock Why does deep and cheap learning work so well?
\newblock {\em Journal of Statistical Physics}, 168(6):1223?1247, Jul 2017.

\bibitem[Mon06]{Montague2006}
Read Montague.
\newblock {\em Why Choose This Book?: How We Make Decisions}.
\newblock Dutton, 2006.

\bibitem[RT60]{Tanimoto1960}
David~J. Rogers and Taffee~T. Tanimoto.
\newblock A computer program for classifying plants.
\newblock {\em Science}, 132:1115--1118, 1960.

\bibitem[Sol64]{Solomonoff1964a}
Ray Solomonoff.
\newblock {\em A Formal Theory of Inductive Inference, Part I}.
\newblock Information and Control, 1964.

\bibitem[WV17]{weinbaum2017open}
David Weinbaum and Viktoras Veitas.
\newblock Open ended intelligence: the individuation of intelligent agents.
\newblock {\em Journal of Experimental \& Theoretical Artificial Intelligence},
  29(2):371--396, 2017.

\bibitem[YGSS10]{FrontierSearch}
S~Yi, T~Glasmachers, T.~Schaul, and J~Schmidhuber.
\newblock Frontier search.
\newblock In {\em Proc. of the 3rd Conf. on AGI}. Atlantis Press, 2010.

\bibitem[Zur89]{Zurek1989}
W.~Zurek.
\newblock Algorithmic randomness and physical entropy.
\newblock {\em Physical Review A}, 40:4731Ð4751, 1989.

\bibitem[Zur98]{Zurek1998}
W.H. Zurek.
\newblock {\em Algorithmic Randomness, Physical Entropy, Measurement, and the
  Demon of Choice}, pages 393--410.
\newblock Perseus, 1998.

\end{thebibliography}
\bibliographystyle{alpha} 

\end{document}